\newcommand{\algovspace}{\vspace{1pt}}
\newcommand{\xmark}{\ding{55}}%
\acrodef{MPC}[MPC]{Model Predictive Control}
\acrodef{QP}[QP]{Quadratic Program}
\acrodef{CBF}[CBF]{Control Barrier Function}
\newcommand{\vx}{{\boldsymbol x}}
\newcommand{\vu}{{\boldsymbol u}}
\newcommand{\vq}{{\boldsymbol q}}
\newcommand{\vc}{{\boldsymbol c}}
\newcommand{\vy}{{\boldsymbol y}}
\newcommand{\vz}{{\boldsymbol z}}
\newcommand{\vp}{{\boldsymbol p}} 
\newcommand{\vv}{{\boldsymbol v}} 
\newcommand{\ve}{{\boldsymbol e}} 
\newcommand{\lieder}{L}
\newcommand{\StateSpace}{\mathcal{X}}
\newcommand{\ControlSpace}{\mathcal{U}}
\newcommand{\RealSpace}{\mathbb{R}}
\newcommand{\Prob}{\mathbb{P}}
\newcommand{\NaturalNumber}{\mathbb{N}}
\newcommand{\Rn}{\mathbb{R}^{n}}
\newcommand{\Rm}{\mathbb{R}^{m}}
\newcommand{\Rnm}{\mathbb{R}^{n \times m}}
\newcommand{\calC}{\mathcal{C}} 
\newcommand{\calK}{\mathcal{K}} 
\newcommand{\calG}{\mathcal{G}} 
\newcommand{\calE}{\mathcal{E}} 
\newcommand{\calV}{\mathcal{V}} 
\newcommand{\calX}{\mathcal{X}} 
\newcommand{\calT}{\mathcal{T}} 
\newcommand{\calS}{\mathcal{S}} 
\newcommand{\calN}{\mathcal{N}} 
\newcommand{\calD}{\mathcal{D}} 
\newcommand{\calL}{\mathcal{L}} 
\newcommand{\calA}{\mathcal{A}} 
\newcommand{\vF}{{\bf F}}
\newcommand{\vE}{{\bf E}}
\newcommand{\vX}{{\bf X}} 
\newcommand{\vY}{{\bf Y}} 
\newcommand{\VSigma}{{\bm{\Sigma}}}
\newcommand{\vmu}{{\bm{\mu}}}
\newcommand{\horizon}{H} 
\DeclareMathOperator*{\argmin}{arg\,min}
\newtheorem{definition}{Definition}
\newtheorem{theorem}{Theorem}
\newtheorem{lemma}{Lemma}
\theoremstyle{definition}
\newtheorem{remark}{Remark} 
\theoremstyle{definition}
\newtheorem{problem}{Problem}
\theoremstyle{definition}
\theoremstyle{definition}
\let\save@mathaccent\mathaccent
\newcommand*\if@single[3]{%
  \setbox0\hbox{${\mathaccent"0362{#1}}^H$}%
  \setbox2\hbox{${\mathaccent"0362{\kern0pt#1}}^H$}%
  \ifdim\ht0=\ht2 #3\else #2\fi
  }
\newcommand*\rel@kern[1]{\kern#1\dimexpr\macc@kerna}
\newcommand*\widebar[1]{\@ifnextchar^{{\wide@bar{#1}{0}}}{\wide@bar{#1}{1}}}
\newcommand*\wide@bar[2]{\if@single{#1}{\wide@bar@{#1}{#2}{1}}{\wide@bar@{#1}{#2}{2}}}
\newcommand*\wide@bar@[3]{%
  \begingroup
  \def\mathaccent##1##2{%
    \let\mathaccent\save@mathaccent
    \if#32 \let\macc@nucleus\first@char \fi
    \setbox\z@\hbox{$\macc@style{\macc@nucleus}_{}$}%
    \setbox\tw@\hbox{$\macc@style{\macc@nucleus}{}_{}$}%
    \dimen@\wd\tw@
    \advance\dimen@-\wd\z@
    \divide\dimen@ 3
    \@tempdima\wd\tw@
    \advance\@tempdima-\scriptspace
    \divide\@tempdima 10
    \advance\dimen@-\@tempdima
    \ifdim\dimen@>\z@ \dimen@0pt\fi
    \rel@kern{0.6}\kern-\dimen@
    \if#31
      \overline{\rel@kern{-0.6}\kern\dimen@\macc@nucleus\rel@kern{0.4}\kern\dimen@}%
      \advance\dimen@0.4\dimexpr\macc@kerna
      \let\final@kern#2%
      \ifdim\dimen@<\z@ \let\final@kern1\fi
      \if\final@kern1 \kern-\dimen@\fi
    \else
      \overline{\rel@kern{-0.6}\kern\dimen@#1}%
    \fi
  }%
  \macc@depth\@ne
  \let\math@bgroup\@empty \let\math@egroup\macc@set@skewchar
  \mathsurround\z@ \frozen@everymath{\mathgroup\macc@group\relax}%
  \macc@set@skewchar\relax
  \let\mathaccentV\macc@nested@a
  \if#31
    \macc@nested@a\relax111{#1}%
  \else
    \def\gobble@till@marker##1\endmarker{}%
    \futurelet\first@char\gobble@till@marker#1\endmarker
    \ifcat\noexpand\first@char A\else
      \def\first@char{}%
    \fi
    \macc@nested@a\relax111{\first@char}%
  \fi
  \endgroup
}
\begin{document}


\title{Learning to Adapt Control Barrier Functions Under Epistemic and Aleatoric Uncertainty}

\author{Taekyung Kim$^{1}$, Robin Inho Kee$^{1,2}$, and Dimitra Panagou$^{1,3}$ 
\thanks{This research was supported by the Center for Autonomous Air Mobility and Sensing (CAAMS), an NSF IUCRC, under Award Number 2137195, an NSF CAREER under Award Number 1942907.}
\thanks{$^{1}$Taekyung Kim, Robin Inho Kee, and Dimitra Panagou are with the Department of Robotics, University of Michigan, Ann Arbor, MI, 48109, USA {\tt\footnotesize taekyung@umich.edu, inhokee@umich.edu, dpanagou@umich.edu} }%
\thanks{$^{2}$Robin Inho Kee is also a Draper Scholar with the Charles Stark Draper Laboratory, Cambridge, MA, USA. Robin Inho Kee acknowledges support from the Draper Scholars Program.}
\thanks{$^{3}$Dimitra Panagou is also with the Department of Aerospace Engineering, University of Michigan, Ann Arbor, MI, 48109, USA  }%
}



\maketitle

\begin{abstract}
Control barrier functions (CBFs) provide a tractable mechanism for enforcing safety constraints in robotic systems, but their practical performance depends strongly on the choice of class-$\calK$ function parameters. Under input constraints, conservative parameters often preserve feasibility at the cost of slow progress, whereas aggressive parameters can make the CBF-based optimization infeasible or unsafe. This paper proposes Online Adaptive CBF (OA-CBF), a framework for adapting CBF parameters at runtime. We introduce the notion of locally validated CBF parameters, which certify candidate parameters over a finite prediction horizon, and show that safety is preserved when such validation is maintained over successive update intervals. To identify locally validated parameters efficiently, OA-CBF trains a probabilistic ensemble neural network to evaluate queried CBF parameters rather than directly predict a single parameter. A graph-attention encoder represents variable-size obstacle environments, an epistemic uncertainty gate calibrated by conformal prediction rejects unreliable predictions, and a distributionally robust CVaR condition screens aleatoric risk. Among the verified candidates, OA-CBF selects the parameter with the best predicted progress metric and applies it through either an MPC-CBF or CBF-QP safety filter. Simulation studies on dynamic unicycle, planar and three-dimensional quadrotor, kinematic bicycle, and VTOL quadplane benchmarks show that OA-CBF reduces the conservatism of fixed-parameter CBF controllers while maintaining low collision and infeasibility rates. \href{https://github.com/tkkim-robot/online_adaptive_cbf}{\textcolor{red}{[Code]}}  \href{https://www.taekyung.me/oa-cbf}{\textcolor{red}{[Project Page]}}\footnote{Project page: \href{https://www.taekyung.me/oa-cbf}{https://www.taekyung.me/oa-cbf}}
\end{abstract} 



\section{INTRODUCTION}
\subsection{Background}

\begin{figure*}[t]
\centering
\includegraphics[width=0.99\textwidth]{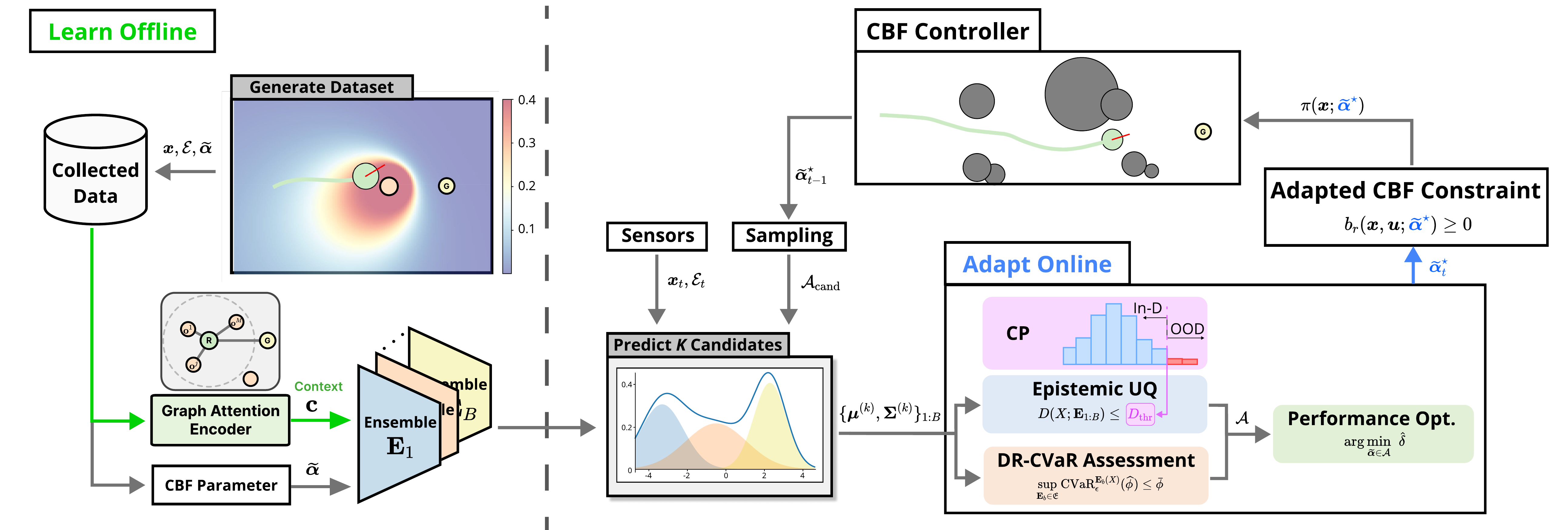}
\caption{
Overview of OA-CBF. Offline rollouts are used to train a graph-attention encoder and probabilistic ensemble neural networks that predict safety and 
performance metrics for queried CBF parameters. During deployment, candidate 
parameters are sampled online, evaluated in batch, filtered by an epistemic uncertainty gate calibrated using conformal prediction and an aleatoric uncertainty gate evaluated by distributionally robust CVaR, and selected according to predicted task performance before updating the CBF-based controller.
}
\label{fig:overview}
\label{fig:main_diagram}
\vspace{-5pt}
\end{figure*}

Robotic systems operating in cluttered or dynamic environments must satisfy hard safety constraints while still making efficient progress toward task objectives. Control Barrier Functions (CBFs) have become a standard tool for safety-critical control because they can be used to synthesize safety filters that minimally modify a nominal controller while enforcing forward invariance of a prescribed safe set~\cite{ames_control_2019}. For robotic systems with high-relative-degree constraints or hard actuator limits, this idea has been extended through higher-order CBFs~\cite{xiao_control_2019}, input-constrained CBFs~\cite{agrawal_safe_2021}, and sampled-data MPC-CBF formulations~\cite{zeng_safetycritical_2021, zeng_enhancing_2021}.

A persistent practical difficulty is that the behavior of a CBF-based controller depends strongly on the chosen class-$\mathcal K$ functions, or equivalently on the decay parameters used in the CBF constraint~\cite{xiao_feasibilityguided_2020, parwana_ratetunable_2025, kim_your_2026}. These parameters do not merely tune performance; under input constraints, they also influence feasibility and the size of the corresponding safe set that the controller can render forward invariant~\cite{agrawal_safe_2021, kim_how_2025}. Conservative parameters typically activate the safety constraint early and preserve feasibility, but they can lead to unnecessarily slow or circuitous motion. Aggressive parameters may improve nominal progress, but they can make the safety filter infeasible near obstacles or under limited control authority.

This tradeoff is especially restrictive when a single fixed parameter is used across all states and environments. A parameter that is safe for one operating condition may be overly conservative in another, while a parameter that performs well far from obstacles may render the safety filter infeasible when the system approaches the boundary of the safe set with insufficient actuation margin. Therefore, effective CBF deployment requires a mechanism that can adapt the CBF parameter online while rejecting unsafe or unreliable updates before they are applied to the controller.

Learning-based adaptation offers a natural way to reduce this conservatism, but direct prediction of CBF parameters introduces a new challenge: the learned model may be overconfident on states, obstacle configurations, or candidate parameters that are poorly represented in the training data. In safety-critical control, such overconfident extrapolation can be more harmful than conservative behavior, because an invalid parameter update directly changes the feasible-control set of the safety filter. This motivates a different viewpoint: rather than treating the learned model as a direct parameter generator, we use it as an uncertainty-aware evaluator of candidate CBF parameters.

In this paper, we develop Online Adaptive CBF (OA-CBF), a framework that adapts CBF parameters by verifying candidate parameters over a finite prediction horizon. The proposed method evaluates whether a candidate parameter is locally valid around the current operating condition, filters unreliable predictions using epistemic and aleatoric uncertainty estimates, and then selects among the verified candidates according to predicted task performance. This produces an online adaptation mechanism that reduces conservatism while retaining an explicit validation step before each parameter update; the overall offline-training and online-adaptation workflow is summarized in Fig.~\ref{fig:overview}.
\subsection{Our Contributions}

This paper makes the following contributions.

\begin{itemize}
    \item We formulate online CBF-parameter adaptation under input constraints through the notion of locally validated CBF parameters. Instead of requiring a globally valid CBF parameter for all future states, the proposed formulation verifies candidate parameters over a finite prediction horizon around the current operating condition. We show that repeated local validation over overlapping update intervals preserves safety of the resulting adaptive CBF controller.

    \item We develop an uncertainty-aware verification procedure for CBF-parameter adaptation. A probabilistic ensemble neural network predicts safety-risk and performance metrics for queried candidate parameters, while epistemic uncertainty is quantified through ensemble disagreement and calibrated using conformal prediction. For candidates that pass this epistemic gate, a distributionally robust CVaR condition is used to screen aleatoric risk before the parameter is accepted.

    \item We introduce a graph-attention prediction architecture for obstacle-rich environments. The graph encoder aggregates information from a variable number of observed obstacles without padding, truncation, or closest-obstacle selection, while keeping the queried CBF parameter explicit in the model input. This allows the learned model to evaluate candidate parameters rather than directly output a single deterministic parameter.

    \item We demonstrate that the OA-CBF framework can be coupled with both MPC-CBF and CBF-QP safety filters. The proposed method is evaluated on dynamic unicycle, Quad2D, Quad3D, kinematic bicycle, and VTOL quadplane systems, covering both distance-based and non-distance-based CBF candidates. The results show that OA-CBF reduces the conservatism of fixed-parameter controllers and reduces safety-failure rates relative to aggressive fixed-parameter, optimal-decay, and learning-based CBF-QP baselines.
\end{itemize}
\subsection{Related Work}

CBF-based safety methods relevant to this paper can be grouped into
constructive synthesis, learned certificate construction, and online adaptation of safety filters. A key distinction is not only \emph{which safety certificate or safety-filter component is constructed or adapted}, but also whether \emph{deployment-time reconfiguration under input constraints} is supported together with an explicit validation mechanism. \autoref{tab:related_work} provides a summary comparison of these methods.

\begin{table*}[t]
\centering
\caption{Comparison of representative directions for constructing or adapting barrier-based safety filters.}
\label{tab:related_work}
\renewcommand{\arraystretch}{1.3}
\resizebox{\textwidth}{!}{%
\begin{tabular}{lcccccc}
\toprule
\textbf{Method} & \textbf{Design Target} & \textbf{Control Limit Treatment} & \makecell[c]{\textbf{Runtime} \\ \textbf{Reconfiguration}} & \textbf{Runtime Uncertainty Treatment} & \textbf{Core Mechanism} \\
\midrule
SOS / Algebraic CBF synthesis~\cite{xu_correctness_2018, clark_verification_2021}
& CBF itself
& Explicit
& \xmark
& N/A
& SOS program / Positivstellensatz \\

CBVF / HJ reachability~\cite{choi_robust_2021}
& Value function used as CBF
& Native
& \xmark
& Bounded-disturbance robustness
& HJI variational inequality \\

RefineCBF / HJ-Patch~\cite{tonkens_refining_2022, tonkens_refining_2026}
& Candidate CBF / Almost-safe value function
& Native
& \checkmark
& N/A
& Warm-started / Local HJ refinement \\

Robust neural CLBF~\cite{dawson_safe_2021}
& NN approximating CLBF
& Problem-dependent
& \xmark
& Modeled uncertainty; No runtime predictive UQ
& Neural certificate learning \\

PNCBF~\cite{so_how_2024}
& Neural CBF via policy value function
& Native
& \xmark
& Post-training verification; No runtime predictive UQ
& Policy evaluation + CBF-QP \\

RBN~\cite{kim_reachability_2025}
& HJ-based neural CBF approximation
& Native
& \xmark
& Conformal safety calibration
& PINN approximation of HJ solutions \\

aCBF / RaCBF~\cite{taylor_adaptive_2020, black_fixedtime_2021}
& Unknown dynamics parameters
& No / Problem-dependent
& \checkmark
& Dynamics uncertainty only
& Lyapunov-based adaptation \\

Adaptive CBF~\cite{xiao_adaptive_2022}
& Time-varying penalty + Relaxation functions
& Explicit
& \checkmark
& Dynamics uncertainty only
& Auxiliary dynamics + HOCBF/CLF \\

Learned CBF-parameter tuning~\cite{ma_learning_2022, xiao_barriernet_2023, gao_online_2023, berducci_learning_2024, mohammad_soft_2025}
& CBF parameters tuned online
& No / Problem-dependent
& \checkmark
& No runtime predictive UQ
& Differentiable QP / RL / GNN policy \\
\midrule

\textbf{Ours (OA-CBF)}
& \textbf{ICCBF parameter $\bm{\alpha}=\{\alpha_1,\ldots,\alpha_r\}$}
& \textbf{Explicit via local validation}
& \checkmark
& \textbf{Epistemic gate + Aleatoric risk screening}
& \textbf{Probabilistic ensemble + Local validation} \\
\bottomrule
\end{tabular}%
}
\end{table*}

\subsubsection{Constructive and Reachability-Based Methods}

Constructive methods aim to obtain barrier certificates through analytically grounded model-based computations, often yielding exact or systematically certified conditions within the chosen function class. Sum-of-squares~(SOS) techniques provide a standard way to certify nonnegativity of polynomial expressions that encode control and safety conditions through semidefinite programming~\cite{papachristodoulou_tutorial_2005}. Within the CBF literature, Xu \textit{et al.}~\cite{xu_correctness_2018} synthesize CBFs through a combination of SOS programming, physics-based modeling, and optimization, while Clark~\cite{clark_verification_2021} provides SOS-based conditions for verifying and synthesizing CBFs for polynomial control systems. Hamilton-Jacobi~(HJ) reachability offers a more general model-based route: the resulting value function characterizes the safe set and, in the Control Barrier-Value Function~(CBVF) formulation, can be used directly in a CBF-style safety filter with robustness to bounded disturbances~\cite{choi_robust_2021}. These approaches provide strong guarantees, but they suffer from the curse of dimensionality; in practice, direct offline full-grid computation is generally limited to five-dimensional systems~\cite{bansal_hamiltonjacobi_2017,bansal_deepreach_2021}. RefineCBF addresses this gap by warm-starting reachability from a candidate barrier and provably improving safety monotonically~\cite{tonkens_refining_2022}; a recent extension further localizes these updates for in-the-loop refinement under changing environments~\cite{tonkens_refining_2026}.

\subsubsection{Learning-Based Barrier Construction}

Learning-based methods improve computational scalability by approximating certificates or reachability solutions with neural networks. Robust Neural Control Lyapunov-Barrier Functions (CLBFs) learn certificates that retain robustness to modeled uncertainty~\cite{dawson_safe_2021}, and the broader landscape of neural Lyapunov, barrier, and contraction certificates is surveyed in~\cite{dawson_safe_2023}. At the same time, recent theoretical analysis identifies structural limitations of CLBF constructions: in particular, when the safe set is an unbounded subset of the state space, a CLBF for the complement of the safe set cannot exist~\cite{mestres_converse_2025a}. For high-relative-degree systems under tight actuator limits, Policy Neural CBFs (PNCBFs) learn the value function of a nominal policy and show that the resulting maximum-over-time policy value function is itself a CBF~\cite{so_how_2024}. Reachability Barrier Networks (RBNs) instead use physics-informed neural networks~(PINNs) to approximate discounted HJ solutions and combine them with conformal prediction to obtain probabilistic safety guarantees for the learned approximation~\cite{kim_reachability_2025}. In perception-driven settings, In-Distribution Barrier Functions learn a self-supervised policy filter that keeps trajectories close to the distribution of safe demonstrations~\cite{castaneda_indistribution_2023}, while GCBF+ uses graph neural networks to parameterize graph control barrier functions for scalable distributed multi-agent safety~\cite{zhang_gcbf_2025}. Collectively, these methods substantially expand the practical reach of learned safety filters. During deployment, however, these methods typically rely on the offline learned certificate or calibrated approximation; in general, they neither support online reconfiguration nor explicitly address predictive uncertainty at runtime.

\subsubsection{Online Adaptation of Safety Filters}

A complementary direction keeps the barrier structure fixed and adapts safety-filter quantities online. Relatedly, episodic learning with Control Lyapunov Functions~(CLFs) was studied for uncertain robotic systems with parametric uncertainty and unmodeled dynamics~\cite{taylor_episodic_2019}. Taylor and Ames~\cite{taylor_adaptive_2020} later proposed an adaptive CBF~(aCBF) formulation for safety under parametric model uncertainty, while Black \textit{et al.}~\cite{black_fixedtime_2021} introduced a fixed-time-stable parameter adaptation law within a robust adaptive CBFs (RaCBFs) framework, guaranteeing fixed-time convergence of the parameter estimates. In a distinct line of work that uses the same ``adaptive CBFs’’ terminology, Xiao \textit{et al.}~\cite{xiao_adaptive_2022} consider time-varying control bounds and noisy dynamics, introducing adaptive penalty or relaxation variables with auxiliary dynamics to address QP feasibility and conservativeness rather than parametric uncertainty estimation.

More closely related to our method, another line of work adapts class-$\mathcal{K}$ functions or related CBF hyperparameters to improve feasibility and reduce conservatism. Rate-Tunable CBFs (RT-CBFs)~\cite{parwana_ratetunable_2025} parameterize the class-$\mathcal{K}$ function and adapt its parameters online, deriving pointwise sufficient conditions on the parameter dynamics so that multiple CBF constraints continue to admit a common feasible control input over time. Yet, how to systematically synthesize such parameter adaptation laws remains a challenging open problem. Among learning-based approaches, differentiable CBF-QP layers have been used to learn environment- or state-dependent barrier tuning for improved generalization and reduced conservatism~\cite{ma_learning_2022,xiao_barriernet_2023}, while reinforcement-learning-based methods learn online adaptation policies for decentralized navigation~\cite{gao_online_2023}, interactive multi-agent settings~\cite{berducci_learning_2024}, and navigation in unknown environments~\cite{mohammad_soft_2025} by tuning class-$\mathcal{K}$ functions or related constraint hyperparameters. Collectively, these works show that adapting CBF hyperparameters can substantially improve feasibility and performance of the underlying CBF-based controller. However, the adaptation is typically executed through a prescribed update law or a deterministic learned policy, rather than through an explicit deployment-time uncertainty assessment of the candidate update. Consequently, when the adaptation module is poorly calibrated or encounters out-of-distribution conditions, these methods offer limited means to detect and reject overconfident parameter updates before they affect the safety filter.

\section{PRELIMINARIES \label{sec:preliminaries}}
\subsection{Tangent Cone}

Consider a continuous-time nonlinear system:
\begin{equation}
\dot{\vx} = f(\vx) + g(\vx)\vu,
\label{eq:ct_dynamics}
\end{equation}
where $\vx \in \StateSpace \subset \Rn$ is the state and $\vu \in \ControlSpace \subset \Rm$ is the control input, with $\ControlSpace$ being the set of admissible controls for System~\eqref{eq:ct_dynamics}. We assume that the functions $f: \StateSpace \to \Rn$ and $g: \StateSpace \to \Rnm$ are locally Lipschitz continuous. Under a chosen control law $\vu = \pi(\vx)$, which is assumed to assure that the solution to the closed-loop system exists and is unique, the closed-loop dynamics can be written as:
\begin{equation}
\dot{\vx} = f_\textup{cl}(\vx) \coloneqq f(\vx) + g(\vx)\pi(\vx).
\label{eq:closed_loop}
\end{equation}

\begin{definition}[Tangent Cone \cite{blanchini_settheoretic_2008}]\label{def:tangent_cone}
Let $\calS \subset \Rn$ be a closed set. The tangent cone to $\calS$ at $\vx \in \calS$ is defined as
\begin{equation} \label{eq:tangent_cone}
\calT_{\calS}(\vx) \coloneqq \left\{ \vz \in \Rn \mid \liminf_{\tau \to 0} \frac{\textup{dist}(\vx + \tau \vz, \calS)}{\tau} = 0 \right\}.
\end{equation}
\end{definition}
Here, $\textup{dist}(\vy, \calS)$ denotes the distance from a point $\vy \in \Rn$ to $\calS$. If $\calS$ is convex, the ``$\liminf$" can be replaced by a ``$\lim$" in~\eqref{eq:tangent_cone}, and $\calT_{\calS}(\vx)$ remains convex. Moreover, for any $\vx \in \textup{Int}(\calS)$ we have $\calT_{\calS}(\vx)=\Rn$, so that the tangent cone is only non-trivial on the boundary $\partial\calS$ of $\calS$.

\subsection{Safety Analysis of CBFs via Tangent Cone \label{subsec:nagumo}}

First, we demonstrate how CBFs guarantee safety through the notion of the tangent cone. Nagumo's Theorem provides a necessary and sufficient condition for the forward invariance of a set.

\begin{definition}[Forward Invariance]\label{def:forward_invariance}
A set $\calS \subset \Rn$ is rendered forward invariant by a feedback controller $\pi: \calS \to \ControlSpace$ for the closed-loop system~\eqref{eq:closed_loop} if for all $\vx(t_{0}) \in \calS$, the solution $\vx(t) \in \calS$ for all $t \geq t_{0}$.
\end{definition}

\begin{theorem}[Nagumo's Theorem \cite{nagumo_uber_1942}]\label{thm:nagumo}
Consider the closed-loop system~\eqref{eq:closed_loop}\footnote{Although Nagumo's original theorem was stated for unforced systems, here it has been extended to the closed-loop controlled system.} and let $\calS \subset \Rn$ be a closed set. Then, $\calS$ is rendered forward invariant by $\pi$ if and only if
\begin{equation}
f(\vx) + g(\vx)\pi(\vx) \in \calT_\calS(\vx) \quad \forall \vx \in \calS.
\end{equation}
Since $\calT_{\calS}(\vx)=\Rn$ for any $\vx \in \textup{Int}(\calS)$, it is sufficient to verify the condition on $\partial\calS$.
\end{theorem}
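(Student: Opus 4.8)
The plan is to prove the two implications separately, each resting on the first-order expansion $\vx(t+\tau)=\vx(t)+\tau f_\textup{cl}(\vx(t))+o(\tau)$ of the unique solution of~\eqref{eq:closed_loop} and on the $1$-Lipschitz continuity of $\vy\mapsto\textup{dist}(\vy,\calS)$; the reduction of the condition from all of $\calS$ to just $\partial\calS$ is immediate, since $\calT_\calS(\vx)=\Rn$ for $\vx\in\textup{Int}(\calS)$ as already noted, so only $\vx\in\partial\calS$ can impose a constraint.

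\emph{Necessity.} Suppose $\calS$ is forward invariant under $\pi$, fix $\vx\in\calS$, and let $\vx(\cdot)$ solve~\eqref{eq:closed_loop} with $\vx(0)=\vx$, so $\vx(\tau)\in\calS$ for small $\tau>0$. Then $\textup{dist}(\vx+\tau f_\textup{cl}(\vx),\calS)\le\|\vx+\tau f_\textup{cl}(\vx)-\vx(\tau)\|=o(\tau)$; dividing by $\tau$ and letting $\tau\to0^+$ gives $\liminf_{\tau\to0}\textup{dist}(\vx+\tau f_\textup{cl}(\vx),\calS)/\tau=0$, i.e.\ $f_\textup{cl}(\vx)\in\calT_\calS(\vx)$.

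\emph{Sufficiency.} Suppose $f_\textup{cl}(\vx)\in\calT_\calS(\vx)$ for all $\vx\in\calS$, fix $\vx(0)\in\calS$, and set $d(t)\coloneqq\textup{dist}(\vx(t),\calS)$, which is locally Lipschitz in $t$ (composition of the $C^1$ trajectory with the $1$-Lipschitz map $\textup{dist}(\cdot,\calS)$) and hence differentiable almost everywhere. The core estimate is the Dini bound $D_+d(t)\le L\,d(t)$, where $L$ is a local Lipschitz constant of $f_\textup{cl}$ and $D_+$ denotes the lower-right Dini derivative. To establish it, pick a metric projection $\vp(t)\in\calS$ of $\vx(t)$ (it exists since $\calS$ is closed), so $\|\vx(t)-\vp(t)\|=d(t)$ and $f_\textup{cl}(\vp(t))\in\calT_\calS(\vp(t))$; by~\eqref{eq:tangent_cone} there is a sequence $\tau_n\downarrow0$ with $\textup{dist}(\vp(t)+\tau_n f_\textup{cl}(\vp(t)),\calS)=o(\tau_n)$. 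Writing
\[
\vx(t+\tau)=\bigl(\vp(t)+\tau f_\textup{cl}(\vp(t))\bigr)+\bigl(\vx(t)-\vp(t)\bigr)+\tau\bigl(f_\textup{cl}(\vx(t))-f_\textup{cl}(\vp(t))\bigr)+o(\tau),
\]
projecting the first bracket onto $\calS$ and bounding the remaining brackets by $d(t)$ and $\tau L\,d(t)$ yields $d(t+\tau_n)\le d(t)\,(1+\tau_n L)+o(\tau_n)$, whence $D_+d(t)\le L\,d(t)$. At almost every $t$ the ordinary derivative of $d$ exists and equals $D_+d(t)$, so $\tfrac{d}{dt}\bigl(e^{-Lt}d(t)\bigr)\le0$ a.e.; since $e^{-Lt}d(t)$ is absolutely continuous it is nonincreasing, and $d(0)=0$ together with $d\ge0$ forces $d\equiv0$, i.e.\ $\vx(t)\in\calS$ for all $t\ge t_0$ because $\calS$ is closed.

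The main obstacle is the sufficiency direction. The tangent-cone condition only provides, for each $t$, a \emph{subsequence} $\tau_n\downarrow0$ along which $\textup{dist}(\vp(t)+\tau f_\textup{cl}(\vp(t)),\calS)$ decays superlinearly, so one can control only the \emph{lower} right Dini derivative of $d$; consequently the argument must be closed via almost-everywhere differentiability of the locally Lipschitz function $d$ rather than by a textbook Grönwall/comparison inequality. A minor additional technicality is that the metric projection $\vp(t)$ need be neither unique nor continuous in $t$; this causes no difficulty because the Dini estimate is derived pointwise in $t$.
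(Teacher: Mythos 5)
The paper offers no proof of this statement: Nagumo's theorem is invoked as a classical result with a citation to \cite{nagumo_uber_1942}, so there is no in-paper argument to compare against. Your proof is a correct, self-contained rendition of the standard argument (reading the paper's $\liminf_{\tau\to 0}$ as the usual one-sided $\tau \downarrow 0$ of the Bouligand cone): necessity via the first-order expansion of the unique closed-loop solution and the $1$-Lipschitzness of $\vy \mapsto \textup{dist}(\vy,\calS)$; sufficiency via $d(t)=\textup{dist}(\vx(t),\calS)$, the metric projection onto the closed set $\calS$, the pointwise lower-right Dini estimate $D_+ d(t)\le L\,d(t)$, and integration of the a.e.\ derivative of the absolutely continuous function $e^{-Lt}d(t)$ — you correctly invoke absolute continuity rather than an a.e.\ monotonicity fallacy, and you correctly note that only the lower Dini derivative is available and that non-uniqueness of $\vp(t)$ is harmless. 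Two caveats, neither fatal in the paper's setting: (i) your sufficiency direction needs a local Lipschitz constant for $f_\textup{cl}=f+g\pi$, whereas the paper only assumes $f,g$ locally Lipschitz and that $\pi$ yields existence and uniqueness of closed-loop solutions; if $\pi$ is merely continuous, your Gr\"onwall-type estimate does not apply as written and the classical theorem must instead be proved by Euler-polygon/compactness arguments, so strictly speaking you establish a slightly less general version than the one cited (adequate for the Lipschitz QP-based controllers the paper has in mind); (ii) for full rigor one should work on a compact time interval, take $L$ valid on a compact neighborhood of the trajectory, and use a short continuation argument to ensure $\vx(t)$ and $\vp(t)$ stay in that neighborhood — routine bookkeeping that your sketch glosses over but does not misuse.
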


The safety guarantee provided by a CBF can be interpreted through the lens of the tangent cone as characterized by Nagumo's Theorem. For extra clarity in this paper, we first introduce the concept of a candidate CBF. From here, we denote the set $\calS$ as the set of safe states, which can often be encoded by a set of hard constraints provided to the system as requirements.

\begin{definition}[Candidate CBF]\label{def:candidate_cbf}
Let $\widetilde{h}: \StateSpace \to \RealSpace$ be a continuously differentiable function. Any state-based constraint function $\widetilde{h}$ that defines the set $\calC \coloneqq \{\vx \in \StateSpace \mid \widetilde{h}(\vx) \geq 0\}$, where $\calC \subseteq \calS$, is called a candidate CBF.
\end{definition}

A \emph{candidate CBF} becomes a \emph{CBF} when it is paired with a properly chosen extended class $\calK_{\infty}$ function $\alpha$. More precisely, we have the following definition:
\begin{definition}[CBF \cite{ames_control_2019}] \label{def:cbf}
   Let $h: \StateSpace \to \RealSpace$ be a continuously differentiable function and define the set $\calC \coloneqq \{\vx \in \StateSpace \mid h(\vx) \geq 0\}$. The function $h$ is a CBF on $\calC$ for the controlled system~\eqref{eq:ct_dynamics} if there exists an extended class $\calK_{\infty}$ function $\alpha$ such that
   \begin{equation}
       \sup_{\vu \in \ControlSpace} \left[\lieder_f h(\vx) + \lieder_g h(\vx)\vu\right] \geq -\alpha(h(\vx)) \quad \forall \vx \in \calC, 
   \label{eq:cbf-condition}
   \end{equation}
   where $\lieder_f h(\vx)$ and $\lieder_g h(\vx)$ denote the Lie derivatives of $h$ along $f$ and $g$, respectively.
\end{definition} 

Given a CBF~$h$ and its corresponding function $\alpha$, the set of all control inputs that render $\calC \subseteq \calS$ forward invariant, thus ensuring the system remains safe, is defined by the CBF constraint:
\begin{equation}
K_{\textup{cbf}}(\vx; \alpha) \coloneqq \left\{\vu \in \ControlSpace \mid \lieder_f h(\vx) + \lieder_g h(\vx)\vu \geq -\alpha(h(\vx)) \right\}.
\label{eq:k_cbf}
\end{equation}

We introduce two set-valued maps as follows. For each state $\vx \in \StateSpace$, define
\begin{equation}
F(\vx) \coloneqq \left\{ f(\vx) + g(\vx)\vu \mid \vu \in \ControlSpace \right\} \nonumber
\end{equation}
which describes all feasible state derivatives under any admissible control. Define also
\begin{equation}
G(\vx; \alpha) \coloneqq \left\{ f(\vx) + g(\vx)\vu \mid \vu \in K_{\textup{cbf}}(\vx; \alpha) \right\} \nonumber
\end{equation}
which captures the state derivatives that result from control inputs that satisfy the CBF constraint. Clearly, $G(\vx; \alpha)$ is a safety-restricted subset of $F(\vx)$.

The CBF constraint can be interpreted directly through \autoref{thm:nagumo}. For any feedback controller $\pi:\calC\to\ControlSpace$ satisfying $\pi(\vx)\in K_{\textup{cbf}}(\vx;\alpha)$, the closed-loop vector field satisfies
\[
L_fh(\vx)+L_gh(\vx)\pi(\vx)\ge -\alpha(h(\vx)).
\]
In particular, on $\partial\calC$, where $h(\vx)=0$, this condition gives $L_fh(\vx)+L_gh(\vx)\pi(\vx)\ge0$, which is precisely the boundary tangency condition for the superlevel set $\calC$. Hence, by \autoref{thm:nagumo}, any trajectory initialized in $\calC$ remains in $\calC$.



\subsection{Input Constrained CBFs \label{subsec:iccbf}}

The concept of CBFs has been generalized to HOCBFs in both continuous-time~\cite{xiao_control_2019} and discrete time~\cite{xiong_discretetime_2023} settings, which can be used for constraints of high relative degree. Input Constrained CBFs~(ICCBFs)~\cite{agrawal_safe_2021} further generalize HOCBFs, allowing for constraints of higher relative degrees and non-uniform relative degrees for control inputs.

More importantly, the ICCBF structure explicitly addresses the problem of generating CBFs under input constraints, i.e., $\ControlSpace \neq \Rm$. Concretely, suppose that the set~$\calC$ defined by $h$ cannot be rendered forward invariant by any feedback control input~$\vu \in K_\textup{cbf}(\vx; \alpha)$~\eqref{eq:k_cbf}, since there exist some states where it requires $\vu \notin \ControlSpace$ to render safe.

Assume that the original function $h$ is an $r^\textup{th}$ order differentiable function. We define a series of functions $b_{i}:\StateSpace \to \RealSpace$, $i = 0, \ldots, r-1$, as
{\setlength{\jot}{0.5ex} 
\begin{equation}
\begin{split}
\raisetag{9.0ex} 
b_0(\vx;\bm{\alpha}) &\coloneqq h(\vx), \\
b_1(\vx;\bm{\alpha}) &\coloneqq \inf_{\vu \in \ControlSpace}\left[ \dot{b}_0(\vx, \vu;\bm{\alpha}) \right] + \alpha_1 ( b_0(\vx;\bm{\alpha}) ), \\
&\vdots \\
b_{r-1}(\vx;\bm{\alpha}) &\coloneqq \inf_{\vu \in \ControlSpace}\left[ \dot{b}_{r-2}(\vx, \vu;\bm{\alpha}) \right] + \alpha_{r-1} ( b_{r-2}(\vx;\bm{\alpha}) ), \nonumber
\end{split}
\end{equation} }
and the constraint function $b_{r}:\StateSpace \times \ControlSpace \to \RealSpace$ as
\begin{equation} \label{eq:iccbf-constraint}
b_{r}(\vx, \vu;\bm{\alpha}) \coloneqq \dot{b}_{r-1}(\vx, \vu;\bm{\alpha}) + \alpha_{r}\left( b_{r-1}(\vx;\bm{\alpha}) \right) ,
\end{equation}
where $\bm{\alpha} =\{ \alpha_1,\ldots,\alpha_r \}$ is a set of extended class $\calK_{\infty}$ functions. We also define the following sets:
{\setlength{\jot}{0.5ex} 
\begin{equation}
\begin{split}
\raisetag{9.0ex} 
\calC_{0}(\bm{\alpha}) &\coloneqq \{\vx \in \StateSpace \mid b_{0}(\vx; \bm{\alpha}) \geq 0\} \subseteq \calS\\
\calC_{1}(\bm{\alpha}) &\coloneqq \{\vx \in \StateSpace \mid b_{1}(\vx; \bm{\alpha}) \geq 0\} \\
&\vdots \\
\calC_{r-1}(\bm{\alpha}) &\coloneqq \{\vx \in \StateSpace \mid b_{r-1}(\vx; \bm{\alpha}) \geq 0\} .
\end{split}
\label{eq:iccbf-sets}
\end{equation} }

\begin{definition}[Inner Safe Set~\cite{kim_learning_2025}]\label{def:inner-safe-set}
The set $\calC^{\star}(\bm{\alpha}) \subseteq \calS$,
\begin{equation} \label{eq:inner-safe-set}
\calC^{\star}(\bm{\alpha}) \coloneqq \bigcap_{i=0}^{r-1} \calC_i(\bm{\alpha}) ,
\end{equation}
where $\calC_{i}(\bm{\alpha})$ is defined in \eqref{eq:iccbf-sets}, is called an inner safe set for System~\eqref{eq:closed_loop}.
\end{definition}

\begin{definition}[ICCBF~\cite{agrawal_safe_2021}]\label{def:iccbf}
The function $h$ is an ICCBF\footnote{In \cite{agrawal_safe_2021}, $b_{r-1}$ is referred to as ICCBF; in this paper, for notational consistency with HOCBF~\cite{xiao_control_2019}, we designate $h$ as ICCBF, with all corresponding theoretical properties preserved.} on $\calC^{\star}(\bm{\alpha})$~\eqref{eq:inner-safe-set} for System~\eqref{eq:closed_loop} if there exist extended class $\calK_{\infty}$ functions $\bm{\alpha} = \{\alpha_1, \ldots, \alpha_r\}$ such that
\begin{equation}\label{eq:iccbf-condition}
\sup_{\vu \in \ControlSpace} \left[ b_r(\vx, \vu; \bm{\alpha}) \right] \geq 0 \quad \forall \vx \in \calC^{\star}(\bm{\alpha}) .
\end{equation}
\end{definition}
This condition allows the formulation of safety constraints for systems of relative degree at most $r$. Note that the definition does not require condition~\eqref{eq:iccbf-condition} to hold for $\vx \in \calC_{r-1}(\bm{\alpha})$, but only for the subset $\calC^{\star}(\bm{\alpha}) \subset \calC_{r-1}(\bm{\alpha})$.

Analogously to CBF in \autoref{subsec:nagumo}, we establish the connection between ICCBFs and Nagumo's theorem via the tangent cone. For a fixed ICCBF parameter $\bm{\alpha}$, the set of admissible controls satisfying the ICCBF constraint is
\[
K_{\textup{iccbf}}(\vx;\bm{\alpha}) \coloneqq \{\vu\in\ControlSpace \mid b_r(\vx,\vu;\bm{\alpha})\ge0\}.
\]
The corresponding allowable state derivatives are
\[
G(\vx;\bm{\alpha}) \coloneqq \{f(\vx)+g(\vx)\vu \mid
\vu\in K_{\textup{iccbf}}(\vx;\bm{\alpha})\}.
\]
Analogously to the CBF case, if $h$ is an ICCBF on $\calC^{\star}(\bm{\alpha})$, $\vx(t_0)\in \calC^{\star}(\bm{\alpha})$, and the applied feedback satisfies $\pi(\vx)\in K_{\textup{iccbf}}(\vx;\bm{\alpha})$ on $\calC^{\star}(\bm{\alpha})$, then the ICCBF construction ensures the tangent-cone condition for $\calC^{\star}(\bm{\alpha})$ on its boundary. Therefore, by \autoref{thm:nagumo}, $\calC^{\star}(\bm{\alpha})\subseteq\calS$ is forward invariant.




Given a feedback controller $\vu_\textup{nom}(\vx)$ for System~\eqref{eq:ct_dynamics}, one can design a safety-critical controller using the following Quadratic Program~(QP) formulation that finds the minimally deviated safe control input $\vu^{\star}$ from the nominal control input:
\noindent\rule{\columnwidth}{0.4pt}
\textbf{CBF-QP:}
\begin{subequations}
\label{eq:cbf-qp-formulation}
\begin{align}
\vu^{\star} = \pi(\vx;\bm{\alpha}) & \coloneqq \argmin_{\vu \in \ControlSpace} \| \vu - \vu_\textup{nom}(\vx) \|^{2} \\
& \text{s.t.} \quad b_{r}(\vx,\vu;\bm{\alpha}) \geq 0 \label{eq:cbf-qp-constraint}
\end{align}
\end{subequations}
\noindent\rule{\columnwidth}{0.4pt}

The main text uses the continuous-time CBF-QP form for notational clarity. In sampled-data implementations, the same CBF parameter enters a discrete-time CBF or MPC-CBF constraint; the corresponding discrete-time ICCBF and MPC-CBF formulation is summarized in Appendix~\ref{app:discrete_time}. 

\section{OA-CBF PROBLEM FORMULATION\label{sec:problem}}
\begin{figure}[tbp]
\centering
\includegraphics[width=0.93 \linewidth]{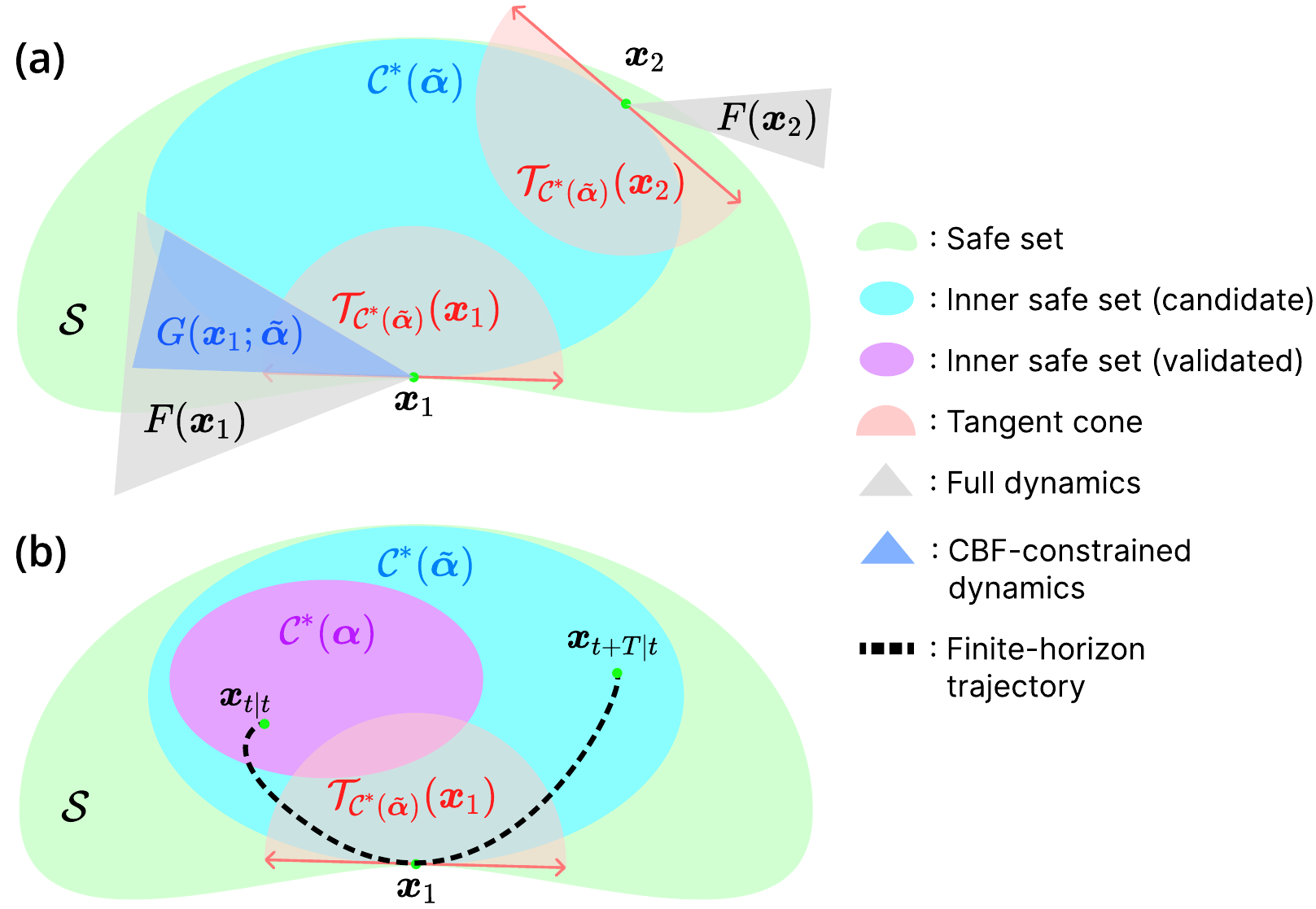}
\caption{
Conceptual illustration of the inner safe set and locally validated CBF parameter. (a) Candidate inner safe set~$\calC^{\star}(\widetilde{\bm{\alpha}})$ defined via an ICCBF with a CBF parameter~$\widetilde{\bm{\alpha}}$. At boundary point $\vx_1$, the CBF-constrained dynamics~$G(\vx_1; \widetilde{\bm{\alpha}})$ is non-empty, whereas at $\vx_2$ it is empty, indicating that the set cannot be rendered forward invariant. (b) With locally validated CBF parameters, the trajectory remains within the inner safe set shown in (a) over the finite-horizon, ensuring safety for that interval. By adapting the CBF parameters, the corresponding inner safe set is reshaped dynamically, alleviating conservatism by allowing the trajectory to extend beyond a fixed, globally verified inner safe set~$\calC^{\star}(\bm{\alpha})$.}
\label{fig:locally_validated}
\vspace{-5pt}
\end{figure}

We now formalize the CBF-parameter adaptation problem for input-constrained systems. In ICCBFs, the parameter $\bm{\alpha}=\{\alpha_1,\ldots,\alpha_r\}$ determines not only the decay rate in the safety constraint, but also the associated inner safe set $\calC^\star(\bm{\alpha})$. Therefore, adapting $\bm{\alpha}$ changes both the admissible-control set and the region that the controller can certify as safe. In what follows, the extended class-$\calK_\infty$ functions $\bm{\alpha}$ will be referred to collectively as the \textbf{\emph{CBF parameter}}.

\begin{remark}
Given a \textbf{\emph{candidate ICCBF}}\footnote{Similar to candidate CBFs in \autoref{def:candidate_cbf}, candidate ICCBFs are constructed following the ICCBF approach, but the ICCBF condition has not yet been verified with appropriate CBF parameters.} $\widetilde{h}$ with a specified parameter~$\widetilde{\bm\alpha}$, we define the set of control inputs satisfying the candidate ICCBF constraint as
\begin{equation}
K_{\textup{cand}}(\vx; \widetilde{\bm\alpha}) \coloneqq \left\{\vu \in \ControlSpace \mid b_{r}(\vx,\vu;\widetilde{\bm\alpha}) \geq 0 \right\}.
\label{eq:k_cand}
\end{equation}
There exist values of $\widetilde{\bm{\alpha}}$ for which, at some states $\vx \in \partial\calC^{\star}(\widetilde{\bm{\alpha}})$ (see Fig.~\ref{fig:locally_validated}a), the set~$K_{\textup{cand}}(\vx; \widetilde{\bm{\alpha}})$ is empty, so the corresponding set~$G(\vx; \widetilde{\bm\alpha})$ is also empty. This implies that the inner safe set $\calC^{\star}(\widetilde{\bm{\alpha}})$ cannot be rendered forward invariant by any control law satisfying the candidate ICCBF constraint.
\end{remark}

For a candidate ICCBF $\widetilde h$, define the set of \emph{globally valid} CBF parameters as
\begin{equation}
\calA_{\infty} \coloneqq \left\{ \bm{\alpha} \mid \widetilde h \text{ is an ICCBF on } \calC^{\star}(\bm{\alpha}) \right\}.
\label{eq:A_infty}
\end{equation}
For any $\bm{\alpha} \in \calA_{\infty}$, the fixed-parameter ICCBF implication above shows that $\calC^{\star}(\bm{\alpha})$ is forward invariant for all future time. In principle, such a parameter certifies recursive feasibility. In practice, however, identifying $\calA_{\infty}$ under input constraints is generally difficult and may be overly conservative.

This paper introduces \textbf{\emph{Online Adaptive CBF~(OA-CBF)}} via the problem stated below:
\begin{problem}[Online Adaptive CBF]\label{prob:framework}
Given the closed-loop system~\eqref{eq:closed_loop} and a safe set $\calS$ (for which a valid ICCBF is unknown), design an online CBF parameter adaptation framework that yields an adapted parameter~$\widetilde{\bm{\alpha}}(t)$ such that a CBF-based controller satisfying
\begin{equation}
\vu(t) \in K_{\textup{cand}}(\vx(t); \widetilde{\bm{\alpha}}(t)) ,
\end{equation}
where $K_{\textup{cand}}$ is given by \eqref{eq:k_cand}, guarantees that $\vx(t) \in \calS$ for all $t \geq t_{0}$.
\end{problem}

The OA-CBF adaptation mechanism is independent of whether the underlying safety filter is implemented as a CBF-QP or as a sampled-data MPC-CBF. In the continuous-time CBF-QP realization, OA-CBF replaces the fixed parameter~$\bm{\alpha}$ in \eqref{eq:cbf-qp-constraint} with the adapted parameter~$\widetilde{\bm{\alpha}}(t)$. In the sampled-data MPC-CBF realization, the same replacement is made in the stage-wise MPC-CBF constraint \eqref{eq:mpc-cbf-constraint}; the discrete-time formulation is provided in Appendix~\ref{app:discrete_time}.

To address Problem~\ref{prob:framework}, we introduce a tractable, state-dependent finite-horizon approximation around the current operating point, termed \emph{locally validated CBF parameters} (see Fig.~\ref{fig:locally_validated}b). Rather than requiring a single CBF parameter to certify safety for all future states, this notion certifies a candidate parameter over a finite prediction interval $[t,t+T]$, where $T<\infty$. We denote by $\vx_{t+\tau|t}$ the closed-loop trajectory predicted by System~\eqref{eq:closed_loop} from the current state $\vx(t)$ with the candidate parameter $\widetilde{\bm{\alpha}}(t)$ held fixed over the prediction horizon, and by $\vu_{t+\tau|t} = \pi(\vx_{t+\tau|t};\widetilde{\bm{\alpha}}(t))$ the corresponding control input along the predicted trajectory.

\begin{definition}[Locally Validated CBF Parameter\footnote{It was also termed a locally valid parameter in \cite{kim_learning_2025}.}]
\label{def:locally-validated}
Let $t \ge t_0$ and $T>0$. A CBF parameter $\widetilde{\bm{\alpha}}(t)$ is said to be \emph{locally validated} over the interval $[t,t+T]$ if:
\begin{enumerate}[leftmargin=5pt, labelindent=5pt, labelsep=0.0em, itemindent=*, align=left]
    \item the current state belongs to the associated inner safe set,
    \begin{equation}
    \vx_{t|t} \in \calC^{\star}(\widetilde{\bm{\alpha}}(t)),
    \label{eq:local-state-membership}
    \end{equation}

    \item and, for the predicted trajectory $\vx_{t+\tau|t}$ generated under a control policy satisfying
    $\pi(\vx_{t+\tau|t}; \widetilde{\bm{\alpha}}(t)) \in K_{\textup{cand}}(\vx_{t+\tau|t};\widetilde{\bm{\alpha}}(t))$,
    the tangent-cone condition
    \begin{equation}
    f(\vx_{t+\tau|t}) + g(\vx_{t+\tau|t})\pi(\vx_{t+\tau|t}; \widetilde{\bm{\alpha}}(t))
    \in \calT_{\calC^{\star}(\widetilde{\bm{\alpha}}(t))}(\vx_{t+\tau|t})
    \label{eq:locally-validated}
    \end{equation}
    holds for all $\tau \in [0,T]$.
\end{enumerate}
Similar to \autoref{thm:nagumo}, the condition in \eqref{eq:locally-validated} is only non-trivial on $\partial \calC^{\star}(\widetilde{\bm{\alpha}}(t))$.
\end{definition}


\begin{remark}[Receding-Horizon Safety of OA-CBF]
The local-validation condition above is used in OA-CBF as a receding-horizon certificate. Let $(t_k)_{k\in\NaturalNumber\cup\{0\}}$ denote the update times, and suppose that the update interval satisfies $t_{k+1}-t_k<T$. At time $t_k$, the adaptation module searches for a candidate parameter $ \widetilde{\bm{\alpha}}_k \coloneqq \widetilde{\bm{\alpha}}(t_k)$ that is locally validated over $[t_k,t_k+T]$. If this parameter is applied on the interval $[t_k,t_{k+1})$, then $[t_k,t_{k+1}) \subset [t_k,t_k+T]$. Thus, the finite-horizon validation performed at $t_k$ covers the entire interval on which $\widetilde{\bm{\alpha}}_k$ is actually used. The next update repeats the same procedure with a newly validated parameter over the next prediction horizon. In this way, OA-CBF treats safety preservation as a receding-horizon validation problem: the methodology in the following sections is designed to identify locally validated CBF parameters online and to reject candidate updates whose validity cannot be assessed reliably.
\end{remark}

The proposed OA-CBF framework offers two significant advantages. First, it allows candidate ICCBFs $\widetilde{h}$ to be used without requiring a fixed CBF parameter~$\bm{\alpha}\in\calA_{\infty}$ to be identified before deployment. Second, it alleviates the conservatism of relying on a single globally valid parameter. A conservative CBF parameter typically confines the system to an inner safe set $\calC^{\star}(\bm{\alpha})$ that is overly restrictive. In contrast, by repeatedly searching for a locally validated parameter $\widetilde{\bm{\alpha}}(t)$, the framework can reshape the inner safe set $\calC^{\star}(\widetilde{\bm{\alpha}}(t))$ around the current state $\vx_{t|t}$ and nearby operating conditions. This permits the trajectory to move beyond a fixed conservative inner safe set, provided that each applied parameter passes the finite-horizon validation step. For the remainder of the paper, we drop the argument $t$ whenever it is clear from context.

Accordingly, the central computational problem becomes:
\begin{problem}[Identifying Locally Validated CBF Parameters]
\label{prob:identifying}
At time $t$, given the current state $\vx_{t|t}$ and the \textbf{\emph{candidate (input constrained) CBF}}~$\widetilde{h}$, determine the set of locally validated CBF parameters, i.e.,
\begin{equation}
\calA \coloneqq \left\{
\widetilde{\bm{\alpha}} \mid \vx_{t|t} \in \calC^{\star}(\widetilde{\bm{\alpha}})
\text{ and }
\eqref{eq:locally-validated} \text{ holds over } [t,t+T]
\right\}.
\label{eq:A_t}
\end{equation}
Equivalently, $\calA$ is a state-dependent finite-horizon approximation of the ideal infinite-horizon set $\calA_{\infty}$ in \eqref{eq:A_infty}.
\end{problem}

The subsequent sections, \autoref{sec:basic} and \autoref{sec:offline}-\autoref{sec:online}, address possible solutions to \autoref{prob:identifying}.

\section{LIMITATIONS OF DIRECT PARAMETER PREDICTION \label{sec:basic}}

The core idea behind solving \autoref{prob:framework} is to identify \emph{online} locally validated CBF parameters $\widetilde{\bm{\alpha}}(t_{k})$, $k \in \NaturalNumber \cup \{0\}$. A na\"{i}ve approach is to resort to an exhaustive search by forward propagating the closed-loop system trajectory over all possible parameters; however, this method is computationally intractable. An alternative is to collect data samples offline and leverage prediction models for online estimation. Neural networks, as universal function approximators, have demonstrated exceptional capability in handling large training datasets while providing constant-time inference, making them attractive candidates for this task. 


A spontaneous solution might be to train a neural network to predict the optimal CBF parameters online and then directly adapt those parameters. Let us denote the input by $\widebar{X} = [\vx_{t|t}^{\top}, \calE_{t}^{\top}]^{\top}$, where $\vx_{t|t}$ is the system state and $\calE_{t}$ represents additional environmental information (e.g., obstacle information in a collision avoidance task) at time $t$, and denote the output by $\widebar{Y} = \widetilde{\bm{\alpha}}^{\star}$ representing the optimal parameter $\widetilde{\bm{\alpha}}^{\star} \in \calA$. Then, a deterministic neural network~$\vE$, parameterized by $\bm{\theta}$, defines the mapping (see Fig.~\ref{fig:nn}(a)):
\begin{equation} \label{eq:prior}
\widetilde{\bm{\alpha}}^{\star} = \vE(\widebar{X};\bm{\theta}) .
\end{equation}

While the works in \cite{ma_learning_2022, xiao_barriernet_2023, gao_online_2023, berducci_learning_2024, mohammad_soft_2025} do not explicitly implement our locally validated CBF parameter adaptation framework, they fall into this category and have demonstrated promising results in reducing conservatism through online adaptation of CBF parameters using neural networks. However, these approaches do not sufficiently account for the inherent uncertainty in neural network predictions. Consequently, the safety of the system cannot be guaranteed, as the predicted CBF parameter $\widetilde{\bm{\alpha}}^{\star}$ out of \eqref{eq:prior} might not lie within the set of locally validated CBF parameters $\calA$ given by \autoref{prob:identifying}.

\section{PREDICTION MODEL DESIGN AND OFFLINE TRAINING \label{sec:offline}}

The approach in the previous section, as illustrated by~\eqref{eq:prior}, essentially treats the trained model as infallible, offering no mechanism to judge the confidence of those predictions. In contrast, we propose a probabilistic verification method for \autoref{prob:identifying} as shown in Fig.~\ref{fig:nn-config}(b)-(c): instead of outputting a single CBF parameter 
$\widetilde{\bm{\alpha}}^{\star}$, the model evaluates queried candidate 
parameters $\widetilde{\bm{\alpha}}$ by predicting distributions over a safety 
metric $\phi$ and a performance metric $\delta$, which will be defined in this 
section. The key novelty is that this approach enables us to verify locally validated CBF parameters despite uncertainty in neural network predictions in a principled way. This shifts the burden of ensuring safety from the trained model to a verification process that interprets its outputs with uncertainty quantification.

In this section, we describe the offline training phase used to learn a CBF-parameter-conditioned prediction model for online adaptation. We first introduce an attention-based graph aggregation module that maps the robot state, goal, and variable-size obstacle set into a fixed-dimensional context vector while keeping the queried CBF parameter explicit in the model input~(\autoref{subsec:graph}). We then describe the probabilistic ensemble neural network~(PENN) used to predict distributions over safety and performance labels~(\autoref{subsec:penn}). Next, we define these labels, including the safety-risk metric used to certify candidate parameters and the performance metric used later for selection~(\autoref{subsec:safety-loss}). Finally, we present the rollout-based data generation procedure and the training loss function~(\autoref{subsec:training}). The resulting model does not directly output the parameter to be applied; instead, it evaluates queried parameters, enabling the online uncertainty-aware verification and selection procedure in \autoref{sec:online}.
\subsection{Attention-Based Graph Aggregation \label{subsec:graph}}

To effectively learn the relationship between the system state, the environment, and the validity of CBF parameters, we must first design a network embedding that is scalable to the variable nature of the environment (e.g., a varying number of obstacles). Let $\vx_{t|t} \in \StateSpace$ denote the robot state at time $t$. While different realizations of extended class $\calK_{\infty}$ functions exist (e.g., as shown in \cite{xiao_feasibilityguided_2020}), we use linear functions for simplicity and denote $\widetilde{\bm{\alpha}} \in \RealSpace^{r}$ as the coefficients of the candidate CBF parameters to be evaluated.

In this subsection, we detail how to construct the data embedding for the neural network using the local environmental information accessible at a given time $t$. For notational simplicity, we omit the time subscript $t$ in the remainder of this subsection unless specified otherwise. We represent the environment $\calE$ as a set of feature vectors describing $M$ neighboring obstacles observed within a predefined sensing range $l_{\textup{range}}$:
\begin{equation}
\calE = \{ \mathbf{o}^{1}, \mathbf{o}^{2}, \dots, \mathbf{o}^{M} \}, \nonumber
\end{equation}
where $\mathbf{o}^{j} \in \RealSpace^{N_{o}}$ represents the state (e.g., position, velocity, size) of the $j^\textup{th}$ obstacle. Since the number of obstacles $M$ varies over time, a fixed-size input vector cannot directly accommodate $\calE$. Prior approaches have addressed this by selecting the $K$ most critical obstacles using a heuristic metric and applying padding or truncation to fit a fixed input size, as seen in \cite{kim_learning_2025, kim_how_2025}. However, such heuristics may inadvertently discard information crucial for safety in complex scenarios. To address this, we employ an attention-based graph aggregation mechanism that processes the entire set of observed obstacles without information loss.

\begin{figure}[tbp]
\centering
\includegraphics[width=0.99\linewidth]{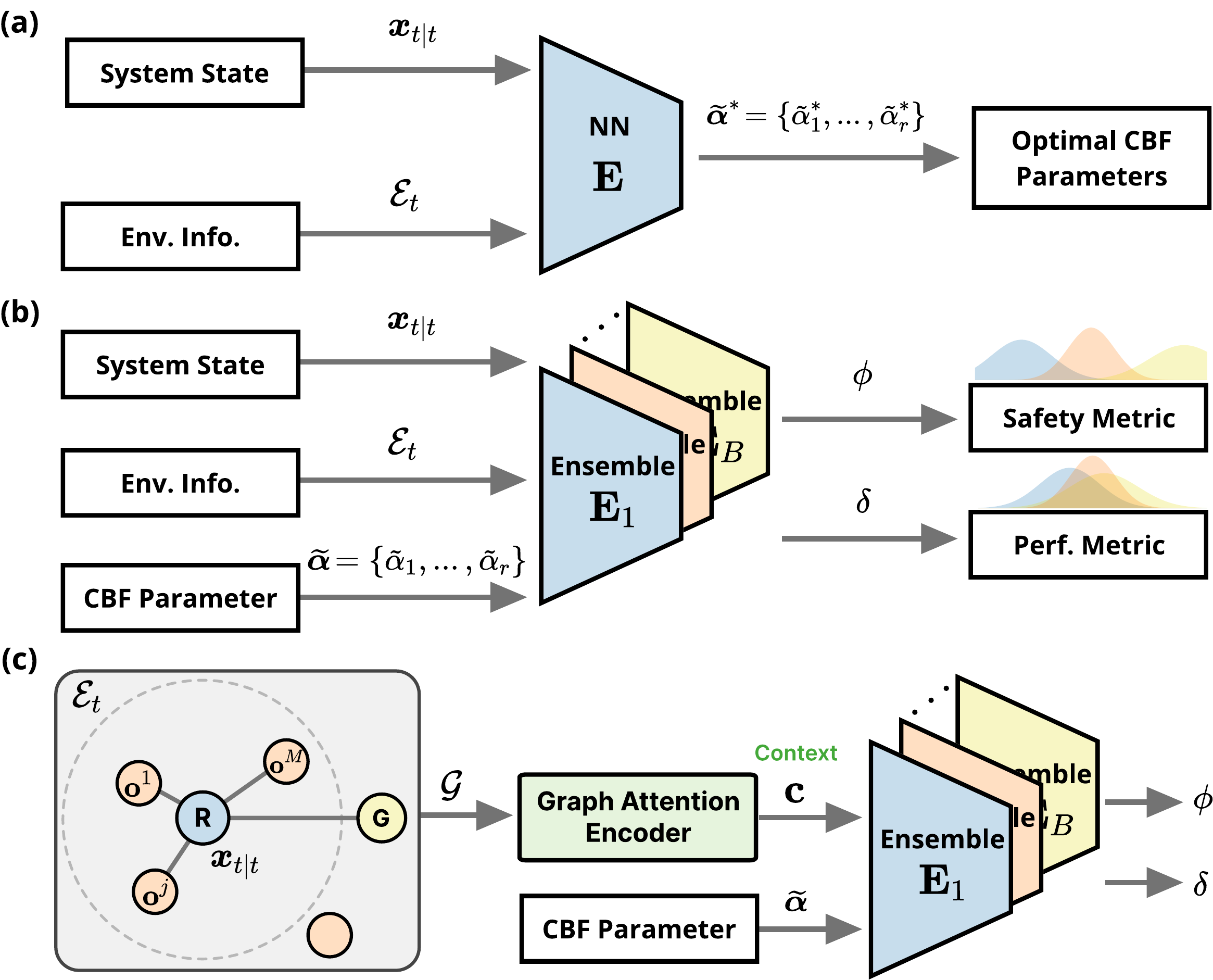}
\caption{
Neural network configurations for CBF parameter adaptation. (a) Direct-prediction approaches map the system state and environment information to a single adapted CBF parameter $\widetilde{\bm{\alpha}}^{\star}$.
(b) The proposed method instead takes a queried candidate CBF parameter $\widetilde{\bm{\alpha}}$ as input and predicts safety and performance metrics $(\phi,\delta)$ for that candidate.
(c) OA-CBF with graph attention first encodes the robot-goal-obstacle graph $\mathcal G$ into a context feature $\vc$, then evaluates the concatenated input $[\vc^\top,\widetilde{\bm{\alpha}}^\top]^\top$ using the PENN.
}
\label{fig:nn-config}
\label{fig:nn}
\vspace{-5pt}
\end{figure}

\subsubsection{Graph Construction}
We define a graph $\calG = (\calV, \calE_{\textup{edge}})$, where $\calV$ is the set of nodes and $\calE_{\textup{edge}}$ is the set of edges. The node set $\calV$ consists of $M + 2$ entities: the robot itself, the goal, and the $M$ detected obstacles. To differentiate between node types, we assign a node feature vector $\vv_i$ using one-hot encoding. Specifically, $\vv_i = [1, 0, 0]^\top$ for the robot, $\vv_i = [0, 1, 0]^\top$ for an obstacle, and $\vv_i = [0, 0, 1]^\top$ for the goal.

The edge set $\calE_{\textup{edge}}$ describes the flow of information between agents. An edge $e_{j \to i}$ exists from a sender node $j$ to a receiver node $i$ if they are connected within the graph structure. We define an edge feature vector $\ve_{ji} \in \RealSpace^{\rho_e}$ to encode the information shared from node $j$ to node $i$, which depends on their respective states. Specifically, $\ve_{ji}$ captures relative spatial and dynamic relationships:
\begin{equation}
\ve_{ji} \coloneqq \left[ (\vp_j - \vp_i)^\top, (\vv_j - \vv_i)^\top, \operatorname{dist}(\vp_j, \vp_i) \right]^\top, \nonumber
\end{equation}
where $\vp$ and $\vv$ denote position and velocity vectors, respectively. The function $\operatorname{dist}(\cdot)$ computes the minimum distance between the boundaries of the agents:
\begin{equation}
\operatorname{dist}(\vp_j, \vp_i) \coloneqq \max\big(0, \|\vp_j - \vp_i\| - R_j - R_i\big), \nonumber
\end{equation}
where $R$ denotes the radius of the agent. For the goal node, we set the radius and velocity to zero.

\subsubsection{Aggregation via Graph Attention}
To process the graph $\calG$ into a fixed-size context vector, we utilize a Graph Neural Network~(GNN) structure with an attention-based aggregation mechanism~\cite{li_graph_2019, zhang_gcbf_2025}. For each edge connecting a neighbor $j$ to the receiver node $i$, we first construct a raw input feature $\vz_{ji} \coloneqq [\vv_i^\top, \vv_j^\top, \ve_{ji}^\top]^\top$. This feature is mapped to a latent feature $\vq_{ji}$ via a Multi-Layer Perceptron~(MLP) encoder, denoted as $\psi_{\textup{enc}}$:
\begin{equation}
\vq_{ji} = \psi_{\textup{enc}}(\vz_{ji}). \nonumber
\end{equation}

To distinguish the relative importance of different neighbors (e.g., an obstacle on a collision course requires more attention than a distant one), we calculate attention weights $w_{ji}$. We employ a ``gate" neural network $\psi_{\textup{gate}}$ to compute a scalar score for each neighbor, which is then normalized via a softmax function:
\begin{equation}
w_{ji} = \frac{\exp(\psi_{\textup{gate}}(\vq_{ji}))}{\sum_{k \in \calN_i} \exp(\psi_{\textup{gate}}(\vq_{ki}))}, \nonumber
\end{equation}
where $\calN_i$ denotes the set of neighbors for node $i$. Note that the attention weights satisfy $\sum_{j \in \calN_i} w_{ji} = 1$ and $w_{ji} \in [0, 1]$.

The final aggregated context vector $\vc_i$ is obtained by a weighted summation of the transformed features:
\begin{equation}
\vc_i = \sum_{j \in \calN_i} w_{ji} \, \psi_{\textup{mes}}(\vq_{ji}), \nonumber
\end{equation}
where $\psi_{\textup{mes}}$ is an MLP that processes the latent feature into a message vector. While this aggregation process is general for any node, for the purpose of verifying safety parameters for the ego-robot, we only query the context vector of the robot node, denoted hereafter as $\vc$.

Finally, to verify the specific candidate CBF parameter $\widetilde{\bm{\alpha}}$, we concatenate it with the aggregated context to form the complete embedding vector $X$:
\begin{equation} \label{eq:network_embedding}
X \coloneqq \left[ \vc^{\top}, \widetilde{\bm{\alpha}}^{\top} \right]^{\top}.
\end{equation}
This embedding $X$ encapsulates both the environmental context and the proposed safety parameters, serving as the input to the probabilistic ensemble neural network described in the subsequent section.

\subsection{Probabilistic Ensemble Neural Network Model \label{subsec:penn}}

Given the embedding vector $X$ derived in \autoref{subsec:graph}, which encodes the candidate CBF parameter along with the environmental context, we employ a Probabilistic Ensemble Neural Network~(PENN) as the prediction model, denoted as $\vF$. This architecture allows us to capture both the prediction uncertainty arising from limited data and the inherent output variability~\cite{chua_deep_2018}.

The ensemble consists of $B$ independent neural network members, indexed by $b \in \{ 1, \ldots, B\}$. Each member~$\vE_{b}$ maps the input embedding $X$ to a Gaussian distribution:
\begin{equation}
\vE_{b}(X; \bm{\theta}_{b}) \sim \calN(\vmu_{\bm{\theta}_{b}}(X), \VSigma_{\bm{\theta}_{b}}(X)) \,, \nonumber
\label{eq:gaussian}
\end{equation}
where $\vmu$ and $\VSigma$ are the mean vector and the diagonal covariance matrix, respectively, parameterized by the network weights~$\bm{\theta}_{b}$. 

By aggregating the outputs of the ensemble members $\mathfrak{E} = \{\vE_{1}, \ldots, \vE_{B} \}$, the overall predicted vector~$\widehat{Y}$ follows a Gaussian Mixture Model~(GMM) distribution that represents the PENN's belief over the output:
\begin{equation}
\widehat{Y} \sim \vF(X ; \bm{\theta}_{1:B}) = \sum_{b=1}^{B}{w_b \vE_{b}(X; \bm{\theta}_{b})} \,, \quad 0 \leq w_{b} \leq 1 . \nonumber
\label{eq:penn}
\end{equation}
In this work, we employ uniform weighting across the ensemble, setting $w_b = \frac{1}{B}$ for all $b = 1, \ldots, B$. 

We independently initialize each ensemble member with random neural network parameters~$\bm{\theta}_b$. The PENN structure offers two primary advantages for our framework: a) The forward pass for prediction is fully parallelizable across ensemble members, which is highly beneficial for real-time implementation~\cite{kim_bridging_2023}. b) It enables the explicit quantification of both epistemic and aleatoric uncertainties in closed form. Concretely, it provides heteroscedastic aleatoric uncertainty~\cite{kendall_what_2017}. We describe how to robustly verify its prediction under both uncertainties in \autoref{sec:online}.

\subsection{Characterization of Safety and Performance Metrics \label{subsec:safety-loss}}

With the constructed graph information~$\calG$ representing the environment and the candidate CBF parameter~$\widetilde{\bm{\alpha}}$ serving as inputs via the embedding vector~$X$, the model is tasked with predicting quantities that allow us to verify the proposed parameters. The model outputs a vector~$Y = [\phi, \delta]^{\top}$, comprising a safety metric~$\phi \in \RealSpace$ and a performance metric~$\delta \in \RealSpace$ (see Fig.~\ref{fig:nn}(b)). In this subsection, we formally define these quantities which serve as the ground truth labels for our learning framework, focusing primarily on the design of the safety metric~$\phi$ to ensure rigorous verification.

A straightforward scalar metric for verifying the local validity condition~\eqref{eq:locally-validated} is the minimum ICCBF constraint value over the prediction horizon and over all hard constraints, which we term the \textit{safety margin}~$\sigma \in \RealSpace$:
\begin{equation}
\sigma(\widetilde{\bm{\alpha}}) \coloneqq
\min_{\tau \in [0,T]} \min_j
b_{r}^{j} \left(
\vx_{t+\tau | t},\,\vu_{t+\tau | t};
\,\widetilde{\bm{\alpha}}
\right) . 
\label{eq:safety_margin}
\end{equation}
Here, $j$ indexes the hard constraints considered in the local validity check.

Notably, this minimum value is sufficient to verify the finite-horizon ICCBF condition along the predicted trajectory. If $\sigma(\widetilde{\bm{\alpha}}) \geq 0$, then the candidate ICCBF constraint is satisfied for all hard constraints over the interval $[t,t+T]$, and hence the predicted trajectory satisfies the local-validation condition associated with~\eqref{eq:locally-validated}.

To obtain a scalar learning target while preserving the sign information of the ICCBF constraint, we use a certificate-preserving \emph{safety loss density}. Relying only on the minimum constraint value~\eqref{eq:safety_margin} treats all feasible states uniformly and does not distinguish imminent, directionally relevant threats from tangential passes. Conversely, using only a loss shaped by distance or heading can hide violations of the ICCBF constraint. We therefore construct the loss from two terms: a margin term based directly on the ICCBF constraint value, which preserves the certification implication, and a direction/proximity-weighted term, which emphasizes obstacles that are close and aligned with the robot's motion. In this context, let $h^{j}$ define the $j$-th hard constraint and $b_{r}^{j}$ denote the corresponding ICCBF constraint.

\begin{figure}[tbp]
\centering
\includegraphics[width=0.99\linewidth]{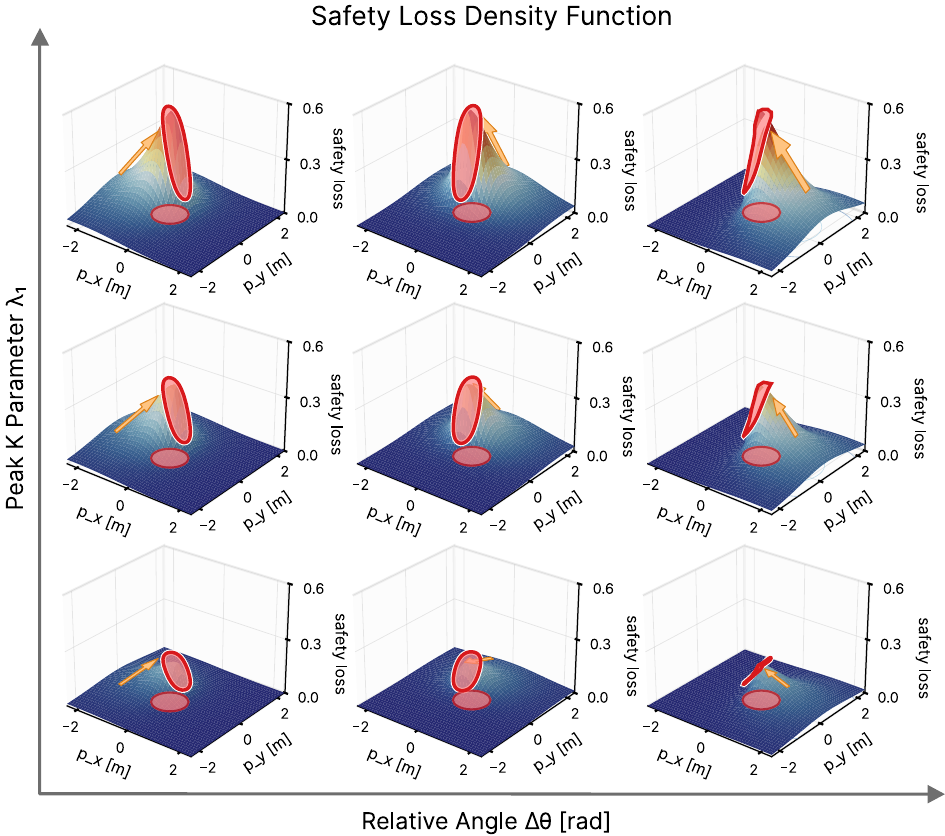}
\caption{
Safety-loss density $\Phi(\vx,\vu;\widetilde{\bm{\alpha}})$ in a canonical single-obstacle setting. A circular obstacle with radius $1\,\mathrm{m}$ is placed at the origin, $\vu$ is fixed to zero, and $\Phi(\vx,\vu;\widetilde{\bm{\alpha}})$ is evaluated over planar robot positions $\vp=[p_x,p_y]^\top$ while varying the peak parameter $\lambda_1$ and the relative angle $\Delta\theta$. The visualization shows how the loss concentrates near the obstacle and is reshaped by the heading-dependent dissipation term, assigning larger loss to configurations that are more directly aligned with the robot's approach direction. The red disk marks the obstacle 
collision region, and the orange arrow indicates an example robot motion direction 
used to interpret $\Delta\theta$.
}
\label{fig:safety-loss-density}
\end{figure}

The term $\lambda^{j}: \StateSpace \times \ControlSpace \to \RealSpace_{\geq 0}$ determines the peak intensity of the loss based on the ICCBF constraint value:
\begin{equation}
\lambda^{j}(\vx, \vu) \coloneqq \lambda_{1} \exp\left(-\lambda_{2} \, b_{r}^{j}(\vx, \vu; \widetilde{\bm{\alpha}}) \right), \nonumber
\label{eq:lambda_j}
\end{equation}
where parameters $\lambda_{1}, \lambda_{2} > 0$ control the nominal peak height and the decay rate. The term $\beta^{j}: \StateSpace \to \RealSpace_{\geq 0}$ modulates the spatial dissipation of the risk based on the robot's heading:
\begin{equation}
\beta^{j}(\vx) \coloneqq \beta_{1} \exp\left(\beta_{2} (1 - \cos(\Delta\theta_j))\right), \nonumber
\label{eq:beta_j}
\end{equation}
where $\Delta\theta_j$ is the angle between the robot's motion vector and the vector pointing to the $j$-th obstacle center. $\beta_{1}, \beta_{2} > 0$ are tuning parameters.

Let
\begin{equation}
D_j(\vx) \coloneqq
\beta^{j}(\vx)
\operatorname{dist}(\vp,\vp_{\textup{obs},j})^2 + 1 ,
\nonumber
\label{eq:D_j}
\end{equation}
where $\vx$ contains the robot position~$\vp$, $\vp_{\textup{obs},j}$ is the position of the $j$-th obstacle, and $\operatorname{dist}(\cdot)$ is the surface-to-surface distance function defined in \autoref{subsec:graph}. We define the safety loss density function $\Phi: \StateSpace \times \ControlSpace \to \RealSpace_{\geq 0}$ as
\begin{equation}
\Phi(\vx,\vu;\widetilde{\bm{\alpha}})
\coloneqq
\max_j
\max
\left\{
\lambda^{j}(\vx,\vu),
\;
\eta
\frac{\lambda^{j}(\vx,\vu)}
{D_j(\vx)}
\right\},
\, \eta \geq 1 .
\label{eq:safety_density}
\end{equation}
The first term in~\eqref{eq:safety_density} uses the raw ICCBF constraint value $b_r^j$ and therefore preserves its sign information. The second term retains the directional and proximity weighting of the original loss. The scalar $\eta$ controls how strongly this weighted term influences the safety label; larger values impose a more conservative penalty on nearby or directionally relevant obstacles. Fig.~\ref{fig:safety-loss-density} illustrates the effect of the proposed safety-loss density in a canonical single-obstacle setting. 

We can now define the finite-time \emph{risk level}~$\phi_T$ as the maximum safety loss over the predicted trajectory: 
\begin{equation}
\phi_{T}(\vx_{t};\widetilde{\bm{\alpha}})
\coloneqq
 \max_{\tau \in [0, T]}
\Phi(
\vx_{t+\tau|t},
\vu_{t+\tau|t};
\widetilde{\bm{\alpha}}
) .  \nonumber
\end{equation}
Using the definition of $\Phi$~\eqref{eq:safety_density}, we obtain the following sufficient condition for the finite-horizon ICCBF margin condition:
\begin{align}
\label{eq:condition2}
& \, \phi_T(\vx_t;\widetilde{\bm{\alpha}})
\leq \widebar{\phi}
\coloneqq \lambda_1 \\
\Longrightarrow & \,
\lambda^{j}(
\vx_{t+\tau|t},
\vu_{t+\tau|t})
\leq \lambda_1,
\quad \forall j,\; \forall \tau \in [0,T]
\nonumber \\
\Longrightarrow & \,
\lambda_1
\exp\!\left(
-\lambda_2
b_r^j(
\vx_{t+\tau|t},
\vu_{t+\tau|t};
\widetilde{\bm{\alpha}}
)
\right)
\leq
\lambda_1,
\, \forall j,\; \forall \tau \in [0,T]
\nonumber \\
\Longrightarrow & \,
b_r^j(
\vx_{t+\tau|t},
\vu_{t+\tau|t};
\widetilde{\bm{\alpha}}
)
\geq 0,
\quad \forall j,\; \forall \tau \in [0,T]
\nonumber \\
\Longrightarrow & \,
\sigma(\widetilde{\bm{\alpha}}) \geq 0 . \nonumber
\end{align}
The first implication holds because $\Phi$ in~\eqref{eq:safety_density} is a maximum that contains $\lambda^j$ for every hard constraint. Since $\lambda_1,\lambda_2>0$, the threshold condition $\phi_T(\vx_t;\widetilde{\bm{\alpha}})\leq\widebar{\phi}=\lambda_1$ implies $b_r^j(\vx_{t+\tau|t},\vu_{t+\tau|t};\widetilde{\bm{\alpha}})\geq0$ for every obstacle and every time in the prediction horizon. Thus, $\phi_T\leq\widebar{\phi}$ is a sufficient condition for the finite-horizon ICCBF margin condition $\sigma(\widetilde{\bm{\alpha}})\geq0$. The second term in~\eqref{eq:safety_density} may further penalize nearby or directionally relevant obstacles, but it cannot mask a negative ICCBF margin.


Finally, we define the performance metric~$\delta$ as the accumulated time during the forward prediction interval $[t, t+T]$ for which the system's velocity toward the goal falls below a specified threshold. Lower values of $\delta$ indicate more efficient progress toward the target.

\subsection{Training Data Generation \& Training Loss Function\label{subsec:training}}

To train the neural network, we generate a dataset $\mathcal{D} = \{(\calG_k, \widetilde{\bm{\alpha}}_k, Y_k)\}_{k=1}^{N_{\textup{data}}}$ by rolling out the simulated closed-loop trajectories. For each data point $k$, we randomly sample the initial configuration: the state of the robot~$\vx_{0}$, the number of obstacles~$M$, and the states of the obstacles~$\mathbf{o}^{j}$, $j = 1, \ldots, M$, within the environment to construct the graph~$\calG_k$. We then sample a candidate CBF parameter~$\widetilde{\bm{\alpha}}_k$ and execute the closed-loop system dynamics using the controller $\pi(\cdot; \widetilde{\bm{\alpha}}_k)$ in the sampled scenario.


For each trajectory, we record the target vector $Y_k = [\phi_k,\delta_k]^\top$. To avoid infinite-horizon rollouts, we choose the rollout horizon $T$ sufficiently long for the robot to reach the goal when feasible. In the simulated training environments, the goal equilibrium is chosen to lie strictly inside a known terminal controlled-invariant neighborhood $\calX_f \subset \calS$. Therefore, when the rollout reaches $\calX_f$ by time $T$, the remaining evolution can be safely continued by a stabilizing controller without violating the constraints~\cite{agrawal_gatekeeper_2024, kim_backupbased_2026}. This provides an offline mechanism for biasing the generated labels toward recursive feasibility, while avoiding the stronger assumption that such a terminal invariant set is available online during deployment. Accordingly, if the controller remains feasible and the rollout reaches $\calX_f$ within time $T$, we assign the finite-horizon risk label $\phi_k = \phi_T(\vx_0)$. In contrast, if the controller becomes infeasible or a collision occurs before time $T$, we assign a large positive penalty value to $\phi_k$ to distinguish clearly infeasible cases from marginally safe ones. To simulate observation and model uncertainties, we inject $3\%$ i.i.d. Gaussian noise into the initial robot state $\vx_0$ during data generation.


The collected dataset is split into training and validation sets using a 7:3 ratio. As detailed in \autoref{sec:simulation}, the test cases are generated separately from the training dataset. These scenarios are designed to be fundamentally different from the training distribution in terms of the number of obstacles and their configuration to evaluate the framework's generalization capability.

The probabilistic model is trained to minimize the Gaussian Negative Log-Likelihood~(NLL) loss function. For each training sample, the network receives the graph~$\calG_k$ and the candidate CBF parameter~$\widetilde{\bm{\alpha}}_k$ as inputs to generate a prediction. Let $\vmu_{\bm{\theta}_{b}}$ and $\VSigma_{\bm{\theta}_{b}}$ denote the corresponding mean vector and covariance matrix output by the $b$-th ensemble member. The loss is computed as: 

\begin{equation}
\begin{aligned}
& \calL_{\textup{train}}(\bm{\theta}_{b})
=
\sum_{k=1}^{N_{\textup{data}}}
\Big( 
\log\det \VSigma_{\bm{\theta}_{b}}(X^{(k)})  + \\
&\quad [\vmu_{\bm{\theta}_{b}}(X^{(k)})-Y^{(k)}]^{\top}
\VSigma_{\bm{\theta}_{b}}(X^{(k)})^{-1}
[\vmu_{\bm{\theta}_{b}}(X^{(k)})-Y^{(k)}] 
\Big).
\nonumber
\end{aligned}
\label{eq:train_loss}
\end{equation}

\begin{figure}[tbp]
\centering
\includegraphics[width=0.99\linewidth]{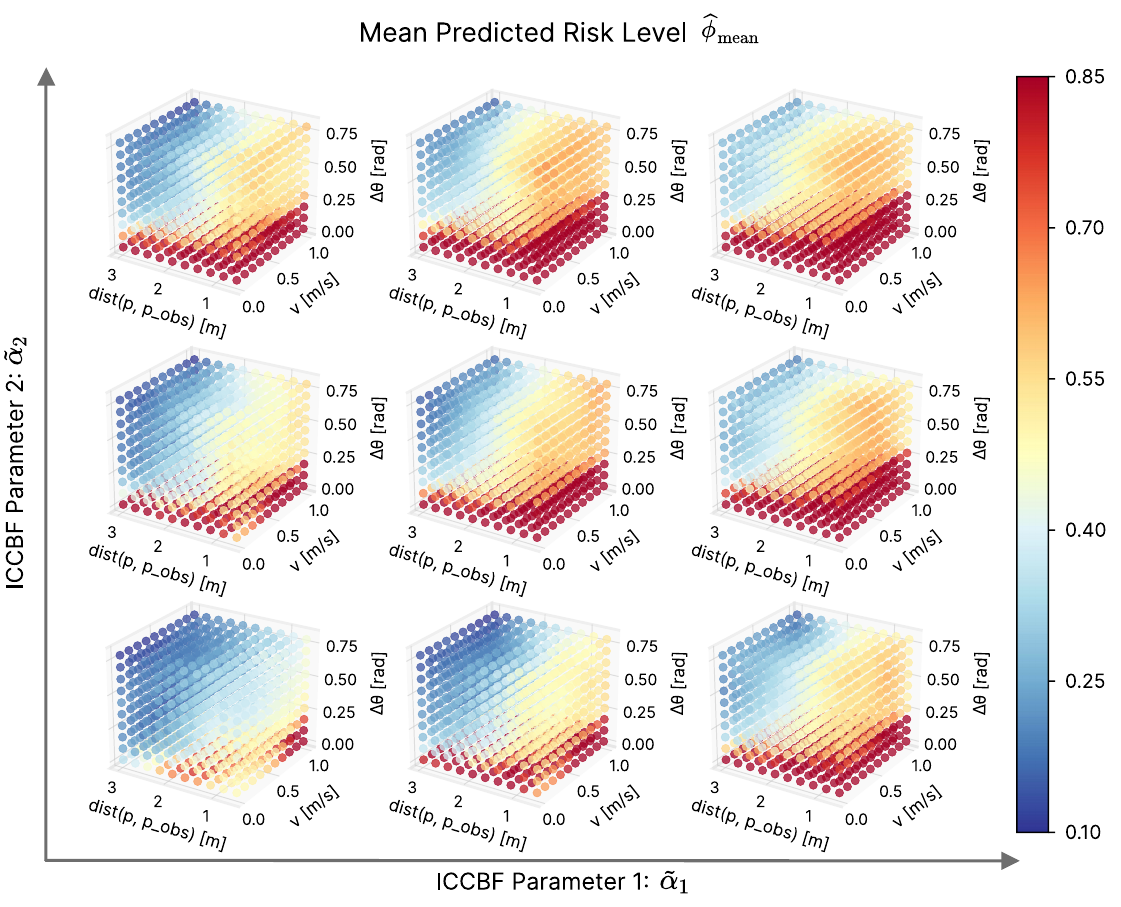}
\caption{
Sensitivity visualization of the ensemble-mean predicted safety-risk level 
$\widehat{\phi}_{\mathrm{mean}}$ from the trained GAT-PENN model used in the 
dynamic-unicycle experiments. Each subplot corresponds to one candidate parameter pair $(\widetilde{\alpha}_1,\widetilde{\alpha}_2)$, while the axes show obstacle distance, robot speed, and relative angle to the obstacle.
}
\label{fig:predicted-risk-level}
\end{figure}

\textbf{Visualization of the learned risk level predictor:} We additionally inspected one-obstacle slices of the trained GAT-PENN predictor, evaluated using the dynamic unicycle model, and observed that the ensemble-mean predicted risk increases for smaller obstacle distances, larger robot speeds, and headings more directly oriented toward the obstacle (see Fig.~\ref{fig:predicted-risk-level}). Online verification uses the full predictive distribution, including epistemic gating and the CVaR condition, rather than the ensemble mean alone.


\section{ONLINE CBF WITH UNCERTAINTY-AWARE ADAPTATION \label{sec:online}}

Having trained the neural network model to predict safety and performance metrics, we now present the online execution phase. In this section, we propose a two-stage verification process to identify locally validated CBF parameters in real-time. First, we quantify the model's epistemic uncertainty (\autoref{subsec:eu}) and calibrate a statistically valid gating threshold using conformal prediction to filter out-of-distribution inputs (\autoref{subsec:cp}). Next, for the remaining confident predictions, we verify safety against aleatoric uncertainty (\autoref{subsec:au}). Finally, we present the complete online adaptation loop that selects the optimal parameter and updates the controller (\autoref{subsec:online-cbf}).
\subsection{Verification Under Epistemic Uncertainty \label{subsec:eu}}

In the first stage of verification, we evaluate the epistemic uncertainty associated with the PENN predictions. Specifically, we quantify the disagreement among ensemble members~$\mathfrak{E} = \{\vE_1, \dots, \vE_B\}$ using the Jensen-Rényi divergence~(JRD):
\begin{equation}
    D\left( X; \vE_{1:B}\right) \coloneqq  H_{a}\left(\sum_{b=1}^{B} w_{b} \vE_{b}(X) \right)- \sum_{b=1}^{B} w_{b} H_{a}\left(\vE_{b}(X)\right) ,
\label{eq:jrd}
\end{equation}
where the Rényi entropy~\cite{renyi_measures_1961}~$H_{a}$ of a random variable~$Z$ and its corresponding density function~$p(z)$ is defined as:
\begin{equation}
    H_{a}(Z) \coloneqq \frac{1}{1-a} \log \int p(z)^{a} \operatorname{d} z . \nonumber
\end{equation}

In general, there is no closed-form solution for computing \eqref{eq:jrd} between an ensemble of Gaussian distributions. While one can approximate \eqref{eq:jrd} via Monte Carlo sampling, this is computationally expensive for real-time applications. Therefore, we adopt quadratic Rényi entropy ($a=2$), which admits a closed-form expression for JRD with Gaussian mixtures~\cite{wang_closedform_2009, kim_bridging_2023}:

\begin{align}
D(X;\vE_{1:B})
& = -\log\left(
\frac{1}{B^2}
\sum_{b=1}^{B}\sum_{c=1}^{B}
\mathfrak{D}_{bc}
\right)
+
\frac{1}{B}\sum_{b=1}^{B}\log \mathfrak{D}_{bb},
\nonumber \\
\mathfrak{D}_{bc}
&\coloneqq
\frac{1}{|\bm{\mathfrak{V}}_{bc}|^{1/2}}
\exp\left(
-\frac{1}{2}
\bm{\Delta}_{bc}^{\top}
\bm{\mathfrak{V}}_{bc}^{-1}
\bm{\Delta}_{bc}
\right),
\nonumber
\end{align}
where
$\bm{\mathfrak{V}}_{bc}
\coloneqq
\VSigma_{\bm{\theta}_{b}}(X)+\VSigma_{\bm{\theta}_{c}}(X)$
and
$\bm{\Delta}_{bc}
\coloneqq
\vmu_{\bm{\theta}_{b}}(X)-\vmu_{\bm{\theta}_{c}}(X)$. $|\bm{\mathfrak{V}}|$ denotes the determinant of $\bm{\mathfrak{V}}$. In essence, the JRD~$D(X) \in \RealSpace$ measures how \emph{``spread out”} or inconsistent the ensemble predictions are.

This closed-form characterization enables direct evaluation of the epistemic uncertainty without sampling or iterative numerical optimization, which is particularly beneficial for real-time onboard deployment.

We leverage the neural network prediction only when the epistemic uncertainty is low. For a given embedded input~$X$~\eqref{eq:network_embedding}, the prediction is retained only if the computed JRD~$D(X)$ remains below a predefined threshold~$D_{\textup{thr}} \in \RealSpace_{>0}$:
\begin{equation}
D\left(X;\vE_{1:B}\right) \le D_{\textup{thr}}.
\label{eq:epistemic_filter}
\end{equation}
If this condition is violated, the input is deemed out-of-distribution~(OOD), and its prediction is discarded.
This serves as our first \emph{``gate''} to ensure that only high-confidence predictions interpolating within the training data are used to determine the local validity of the CBF parameter.



\subsection{Conformal Calibration of the Epistemic Gate \label{subsec:cp}}


A critical challenge in \eqref{eq:epistemic_filter} is selecting the JRD threshold~$D_{\textup{thr}}$ that reliably distinguishes in-distribution~(ID) from OOD inputs. While recent work emphasizes the importance of avoiding OOD operation in learning-based systems~\cite{yang_generalized_2024}, many practical approaches reject OOD inputs at test time using heuristically designed thresholds~\cite{yang_rtdiff_2024} or by tuning them as hyperparameters~\cite{qing_bitrajdiff_2025, sun_socially_2025, kim_how_2025, kim_learning_2025}. However, manually tuning this threshold is nontrivial. A threshold that is too low results in a high false-positive rate, misclassifying valid ID inputs as OOD and potentially leaving no locally validated CBF parameters available for adaptation. Conversely, a threshold that is too high may fail to detect genuinely OOD inputs, thereby compromising safety. In this work, we provide a principled way to calibrate this epistemic uncertainty threshold by employing conformal prediction. %

The problem of distinguishing ID from OOD inputs can be framed as a one-class classification task~\cite{angelopoulos_conformal_2023}. Since OOD data is, by definition, unavailable during training, we employ class-conditioned conformal prediction~\cite{ding_classconditional_2023} on a held-out calibration dataset $\calD_{\textup{calib}} = \{\uptau_{i}\}_{i=1}^{N_{\textup{traj}}}$ containing only ID trajectories. Standard conformal prediction assumes data samples are exchangeable, an assumption violated in our framework due to the temporal correlation between states within a single trajectory. To resolve this, we adopt a trajectory-level calibration approach~\cite{ren_robots_2023}, which treats the entire trajectory as the unit of exchangeability. Accordingly, we assume that the trajectories in $\calD_{\textup{calib}}$ are drawn i.i.d. from the underlying in-distribution $\calD$, as each trajectory is generated by independently sampling the initial robot state, the environmental configuration, and the candidate CBF parameters.

For each trajectory $\uptau_i$ defined over the time interval $[0, T]$, we first compute the JRD profile $\{D(X_{t, i})\}_{t \in [0, T]}$, where $X_{t,i}$ denotes the model embedding vector constructed at time $t$ via the graph aggregation mechanism defined in \eqref{eq:network_embedding}. We then define a trajectory-level nonconformity score $Q_{i}$ as the $(1-\delta_{\textup{traj}})$-quantile of these JRD values observed over the trajectory~\cite{ren_robots_2023}. In this work, we simply take the supremum value, which corresponds to the special case where $\delta_{\textup{traj}}=0$, to capture the worst-case uncertainty:
\begin{equation}
Q_{i} \coloneqq \sup_{t \in [0, T]} D(X_{t,i}) .
\end{equation}

The final threshold $D_{\textup{thr}}$ is determined as the $(1-\delta_{\textup{cal}})$-quantile of the calibration scores $\{Q_i\}_{i=1}^{N_{\textup{traj}}}$:
\begin{equation}
    D_{\textup{thr}} \coloneqq \operatorname{Quantile} \left(
    \{Q_i\}_{i=1}^{N_{\textup{traj}}};\,1-\delta_{\textup{cal}}
    \right).
\label{eq:d_thr_calibration}
\end{equation}
Equivalently, $D_{\textup{thr}}$ is the $\lceil(1-\delta_{\textup{cal}})(N_{\textup{traj}}+1)\rceil$-th smallest value in the sorted list of calibration scores. The calibration dataset $\calD_{\textup{calib}}$ is a held-out subset collected alongside the offline rollout data, but it is not used to train the PENN parameters.

\begin{theorem}[In-Distribution Recall Guarantee]
\label{thm:cp_guarantee}
For any test trajectory $\uptau_{\textup{test}}$ drawn from the in-distribution $\calD$, assuming exchangeability with the calibration trajectories in $\calD_{\textup{calib}}$, the trajectory-level nonconformity score
\[
Q_{\textup{test}} \coloneqq \sup_{t \in [0,T]} D(X_{t,\textup{test}})
\]
satisfies
\begin{equation}
\Prob \left(Q_{\textup{test}} \le D_{\textup{thr}}\right)
\ge 1-\delta_{\textup{cal}}.
\label{eq:cp_guarantee}
\end{equation}
\end{theorem}

\begin{proof}
By the definition of the empirical quantile, the event $Q_{\textup{test}} \le D_{\textup{thr}}$ is equivalent to $Q_{\textup{test}}$ being among the $\lceil(1-\delta_{\textup{cal}})(N_{\textup{traj}}+1)\rceil$ smallest values of the set $\{Q_1,\ldots,Q_{N_{\textup{traj}}},Q_{\textup{test}}\}$. Because the calibration and test trajectories are exchangeable, this event occurs with probability at least $\frac{\lceil (1-\delta_{\textup{cal}})(N_{\textup{traj}}+1) \rceil}{N_{\textup{traj}}+1} \geq 1 - \delta_{\textup{cal}}$, which proves \eqref{eq:cp_guarantee}.
\end{proof}

\autoref{thm:cp_guarantee} is a trajectory-level recall guarantee for in-distribution data. Under the present choice of $Q_{\textup{test}}$ as the supremum JRD over the horizon, the event $Q_{\textup{test}} \le D_{\textup{thr}}$ means that the epistemic uncertainty does not exceed the calibrated threshold anywhere along the test trajectory. In implementation, this supremum is evaluated as the maximum over the discretized rollout samples used in data generation. Therefore, if a candidate input $X$ satisfies $D(X;\vE_{1:B}) > D_{\textup{thr}}$, then any trajectory containing that input necessarily violates the event $Q_{\textup{test}} \le D_{\textup{thr}}$ and is conservatively treated as epistemically unreliable. This yields a principled, non-heuristic calibration of the epistemic gate in \eqref{eq:epistemic_filter}.

\subsection{Verification Under Aleatoric Uncertainty \label{subsec:au}}

For prediction results that pass the epistemic filter, we next examine the aleatoric uncertainty in the predicted safety metric. For notational simplicity, we focus only on the first output component of $Y=[\phi,\delta]^\top$, namely the safety metric~$\phi$, and denote its prediction by~$\widehat{\phi}$; the performance metric~$\delta$ will be used later in \autoref{subsec:select}. For the $b$-th ensemble member, let $ \mu^{\phi}_{\bm{\theta}_{b}}(X)$ and $ \sigma^{\phi}_{\bm{\theta}_{b}}(X)$ denote the mean and standard deviation of this scalar prediction. Then, under $\vE_b(X)$, the marginal prediction of~$\widehat{\phi}$ is
\begin{equation}
\widehat{\phi} \sim \mathcal{N} \left( \mu^{\phi}_{\bm{\theta}_{b}}(X), \bigl( \sigma^{\phi}_{\bm{\theta}_{b}}(X) \bigr)^2 \right).
\label{eq:phi-marginal}
\end{equation}

Recall from~\eqref{eq:condition2} that the deterministic condition $\phi_T(\vx_t;\widetilde{\bm{\alpha}})\leq\widebar{\phi}$, with $\widebar{\phi}=\lambda_1$, is sufficient for the finite-horizon ICCBF margin condition $\sigma(\widetilde{\bm{\alpha}})\geq0$ along the predicted trajectory. Since larger values of $\phi$ correspond to higher predicted risk, the relevant aleatoric uncertainty lies in the right tail of $\phi_b$. We therefore employ the right-tail Value at Risk~(VaR) and Conditional Value at Risk~(CVaR)~\cite{cai_evora_2024, rockafellar_optimization_2000}.

For a single ensemble member's prediction, the right-tail VaR at confidence level $\varepsilon \in (0,1)$ is defined as
\begin{equation}
\textnormal{VaR}_{\varepsilon}^{\vE_b(X)}\left(\widehat{\phi}\right)
\coloneqq
\inf_{\nu \in \RealSpace}
\left\{
\nu \mid \textnormal{Prob}^{\vE_b(X)} \left(\widehat{\phi} \le \nu\right) \ge 1-\varepsilon
\right\}. \nonumber
\label{eq:right-var}
\end{equation}
Thus, $\textnormal{VaR}_{\varepsilon}^{\vE_b(X)}(\widehat{\phi})$ is the $(1-\varepsilon)$-quantile of the predicted risk level.

Building on VaR, we define the corresponding right-tail CVaR as
\begin{equation}
\textnormal{CVaR}_{\varepsilon}^{\vE_b(X)} \left(\widehat{\phi}\right)
\coloneqq
\inf_{\eta \in \RealSpace}
\left\{
\eta + \frac{1}{\varepsilon}\mathbb{E}_{\vE_b(X)}
\left[(\widehat{\phi}-\eta)^+\right]
\right\}, \nonumber
\label{eq:right-cvar}
\end{equation}
where $(\cdot)^+ \coloneqq \max\{\cdot,0\}$. The right-tail CVaR represents the expected value of the worst $\varepsilon$-fraction of predicted risk levels, and therefore accounts for both tail probability and tail severity.

For the Gaussian prediction in \eqref{eq:phi-marginal}, both quantities admit closed-form expressions:
\begin{equation}
\textnormal{VaR}_{\varepsilon}^{\vE_b(X)} \left(\widehat{\phi}\right)
=
 \mu^{\phi}_{\bm{\theta}_{b}}(X)
+
 \sigma^{\phi}_{\bm{\theta}_{b}}(X)\,F_{\mathcal N}^{-1}(1-\varepsilon), \nonumber
\label{eq:right-var-gaussian}
\end{equation}
\begin{equation}
\textnormal{CVaR}_{\varepsilon}^{\vE_b(X)} \left(\widehat{\phi}\right)
=
 \mu^{\phi}_{\bm{\theta}_{b}}(X)
+
 \sigma^{\phi}_{\bm{\theta}_{b}}(X)\,
\frac{f_{\mathcal N} \left(F_{\mathcal N}^{-1}(1-\varepsilon)\right)}{\varepsilon}, \nonumber
\label{eq:right-cvar-gaussian}
\end{equation}
where $F_{\mathcal N}$ and $f_{\mathcal N}$ denote the cumulative distribution function and probability density function of the standard normal distribution, respectively.

To robustly account for aleatoric uncertainty, we treat the ensemble set~$\mathfrak{E}=\{\vE_{1},\ldots,\vE_{B}\}$ as an ambiguity set and impose the following distributionally robust CVaR constraint~\cite{rahimian_distributionally_2022}:
\begin{equation}
\sup_{\vE_b \in \mathfrak{E}}
\textnormal{CVaR}_{\varepsilon}^{\vE_b(X)}\left(\widehat{\phi}\right)
\le \widebar{\phi}.
\label{eq:distributionally-robust-cvar}
\end{equation}
This condition requires the most conservative ensemble member to predict a right-tail risk no larger than the sufficient threshold~$\widebar{\phi}$.

Moreover, since
\[
\textnormal{VaR}_{\varepsilon}^{\vE_b(X)} \left(\widehat{\phi}\right)
\le
\textnormal{CVaR}_{\varepsilon}^{\vE_b(X)} \left(\widehat{\phi}\right)
\qquad \forall b \in \{1,\ldots,B\},
\]
the condition \eqref{eq:distributionally-robust-cvar} implies
\begin{equation}
\inf_{\vE_b \in \mathfrak{E}}
\textnormal{Prob}^{\vE_b(X)} \left(\widehat{\phi} \le \widebar{\phi}\right)
\ge 1-\varepsilon.
\label{eq:prob-cvar-implication}
\end{equation}
Hence, the candidate parameter~$\widetilde{\bm{\alpha}}$ is accepted only if the sufficient local-validity condition is satisfied with probability at least $1-\varepsilon$ under the worst ensemble member.

\textbf{Summary:} After filtering out predictions with high epistemic uncertainty, the distributionally robust CVaR constraint screens candidates whose predicted safety loss may violate the sufficient finite-horizon ICCBF condition~\eqref{eq:condition2} under aleatoric uncertainty. This two-step verification process yields a verified set of locally validated CBF parameters:
\begin{equation}
\begin{aligned}
\mathcal{A}
=\bigl\{\widetilde{\bm{\alpha}} \,\big|\,&
D(X;\vE_{1:B}) < D_{\textup{thr}} \textnormal{ and } \\
&\sup_{\vE_b \in \mathfrak{E}}
\mathrm{CVaR}_{\varepsilon}^{\vE_b(X)} \left(\widehat{\phi}\right)
\le \widebar{\phi}
\bigr\}.
\end{aligned}
\label{eq:verified-set}
\end{equation}

\subsection{Performance-Guided CBF Parameter Selection \label{subsec:select}}

\begin{algorithm}[t]\footnotesize
\caption{Online Adaptive CBF (OA-CBF)}
\label{alg:online_adaptation}
\textbf{Given:} initial parameter $\widetilde{\bm{\alpha}}_0^{\star}$; CBF-based controller $\pi(\cdot;\widetilde{\bm{\alpha}})$; trained PENN $\vF(\cdot;\bm{\theta}_{1:B})$; calibrated epistemic threshold $D_{\textup{thr}}$. \\
\For{$t=1,\ldots,T_{\textup{max}}$}
{
    $\vx_t,\calE_t \leftarrow \texttt{Sensor()}$; \algovspace\\
    $\calA_{\textup{sample}} \leftarrow \texttt{SampleCandidates}(\widetilde{\bm{\alpha}}_{t-1}^{\star})$; \algovspace\\

    \tcp*[h]{--- Parallelizable block ---} \algovspace \algovspace \\
    
    $\calA_{\textup{cand}} \leftarrow \{ \widetilde{\bm{\alpha}} \in \calA_{\textup{sample}} \mid \vx_t \in \calC^{\star}(\widetilde{\bm{\alpha}}) \}$; \algovspace\\
    $\vX =\{X^{(k)} \}^{K}_{k=1} \leftarrow \texttt{CreateInputs}(\vx_t,\calE_t,\calA_{\textup{cand}})$; \algovspace\\
    $\mathfrak{E}, \widehat{\vY} =\{\widehat{Y}^{(k)} \}^{K}_{k=1} \leftarrow \vF(\vX;\bm{\theta}_{1:B})$; \algovspace\\
    $\calA \leftarrow \{\widetilde{\bm{\alpha}} \in \calA_{\textup{cand}} \mid D(X^{(k)};\vE_{1:B}) \le D_{\textup{thr}}$ \algovspace\\
    $\qquad \qquad \qquad \qquad \text{and } \sup_{\vE_b \in \mathfrak{E}} \textnormal{CVaR}_\epsilon^{\vE_b(X^{(k)})}(\widehat{\phi}) \leq \widebar{\phi} \}$; \algovspace\\

    \tcp*[h]{--- End parallelizable block ---} \algovspace \algovspace \\

    $\widetilde{\bm{\alpha}}_t^{\star} \leftarrow \argmin_{\widetilde{\bm{\alpha}} \in \calA} \hat{\delta}$; \algovspace\\
    $\vu_t \leftarrow \pi(\vx_t;\widetilde{\bm{\alpha}}_t^{\star})$; \algovspace\\
    $\texttt{ApplyControlInput}(\vu_t)$;
}
\end{algorithm}

\begin{table*}[t]
\centering
\caption{Input constraints used in the simulation studies.}
\label{tab:input_limits}
{\renewcommand{\arraystretch}{1.15}
\setlength{\tabcolsep}{5pt}
\begin{tabular}{@{} p{0.2\textwidth} p{0.74\textwidth} @{}}
\toprule
\textbf{System} & \textbf{Input Constraints} \\
\midrule
Dynamic unicycle
& $a \in [-0.5,0.5]~\mathrm{m/s^2}$, $\omega \in [-0.5,0.5]~\mathrm{rad/s}$. \\

Quad2D
& $\vu = [u_{\textup{r}},u_{\textup{l}}]^\top$, with $u_{\textup{r}},u_{\textup{l}} \in [2.5,5.5]$. \\

Quad3D 
& $\vu=[u_1,u_2,u_3,u_4]^\top$, with $u_i\in[-2.0,5.0]$. \\

Kinematic bicycle
& $a \in [-5.0,5.0]~\mathrm{m/s^2}$ and $\beta \in [-\beta_{\max},\beta_{\max}]$, where $\beta_{\max}=\tan^{-1}\!\left(\frac{\ell_r}{\ell_f+\ell_r}\tan(32^\circ)\right) \approx 0.303~\mathrm{rad}$ ($17.4^\circ$).\\

VTOL quadplane
& $\delta_f,\delta_r,\delta_p \in [0,1]$, $\delta_e \in [-0.5,0.5]$. \\
\bottomrule
\end{tabular}}
\end{table*}

After the two-step verification, the remaining set $\calA$~\eqref{eq:verified-set} may still contain multiple locally validated CBF parameters. We select among them using the second output of the prediction model, namely the predicted performance metric $\hat{\delta}$, where smaller values indicate more efficient progress toward the goal. Accordingly, the applied parameter is chosen as
\begin{equation}
\widetilde{\bm{\alpha}}_t^{\star} = \argmin_{\widetilde{\bm{\alpha}} \in \calA}
\hat{\delta}.
\label{eq:performance_selection}
\end{equation}
Since this optimization is performed only over already verified candidates, it improves performance without altering the safety guarantee. In practice, when multiple candidates attain nearly identical values of $\hat{\delta}$, one may additionally prefer the candidate with smaller deviation from the previously applied parameter to reduce chattering.
\subsection{Online CBF Parameter Adaptation \label{subsec:online-cbf}}

Integrating the verification and selection steps above, \autoref{alg:online_adaptation} implements the proposed OA-CBF framework. At each decision instant, a batch of candidate parameters is sampled around the previously applied parameter. These candidates are first filtered by the deterministic current-state condition $\vx_t \in \calC^{\star}(\widetilde{\bm{\alpha}})$. The remaining candidates are then evaluated by the PENN model, gated by epistemic uncertainty, verified against aleatoric uncertainty, and finally ranked using the predicted performance metric.

From a computational standpoint, the proposed CBF-parameter verification pipeline is fully batch-parallelizable. At each update step, we perform batch-wise PENN inference over the candidate set $\calA_{\textup{cand}}$ and across all ensemble members, which can be executed efficiently on GPU. Moreover, the JRD and the Gaussian CVaR both admit closed-form solutions, so the epistemic gate and aleatoric verification can be computed directly from the predicted moments in parallel, without Monte Carlo approximation. We emphasize that there is \textbf{\emph{no single iteration loop}} in the entire CBF parameter verification and adaptation procedure, ensuring efficient real-time implementation. 

\section{RESULTS \label{sec:simulation}}

We evaluate the proposed OA-CBF method through simulation experiments on multiple nonlinear, input-constrained dynamical systems. The experiments are designed to answer four key questions. \textbf{(Q1)} Can OA-CBF serve as an effective solution to \autoref{prob:identifying}, i.e., can it identify locally validated CBF parameters online for nonlinear systems with input constraints? \textbf{(Q2)} Does the proposed online adaptation reduce the conservatism of fixed-parameter CBF controllers while preserving safety? \textbf{(Q3)} Does the proposed graph-attention-based environment encoding allow OA-CBF to generalize to obstacle configurations and obstacle counts that differ from those used during training? \textbf{(Q4)} Is OA-CBF applicable beyond classical signed-distance-function~(SDF)-type CBF candidates?

The results are organized as follows. We first describe the common experimental setup and the dynamical systems used in the benchmarks. We then present the CBF candidates used for each system, including both distance-based ICCBFs and a non-distance-based Dynamic Parabolic CBF~(DPCBF)~\cite{park_collision_2026}. Finally, we compare OA-CBF against fixed-parameter controllers, online-decay CBF baselines, and learning-based baselines, including BarrierNet~\cite{xiao_barriernet_2023} and an OA-CBF variant with fully connected environment encoding~\cite{kim_learning_2025, kim_how_2025}.

\subsection{Experimental Setup}
\label{subsec:experimental_setup}

The safety-critical controller~$\pi$ is instantiated differently across baselines. The fixed-parameter baselines and OA-CBF use the sampled-data MPC-CBF realization~\cite{zeng_safetycritical_2021} summarized in Appendix~\ref{app:discrete_time}, unless stated otherwise. The OD-CBF with QP baseline \cite{zeng_safetycritical_2021_decay} follows the standard CBF-QP formulation, while the BarrierNet baseline uses a CBF-QP as the last layer, as in the original algorithm~\cite{xiao_barriernet_2023}. In contrast, OD-CBF with MPC~\cite{zeng_enhancing_2021} uses the same MPC-CBF structure as the MPC-based baselines, but optimizes the decay parameters online inside the MPC problem. Thus, all methods are evaluated on the same dynamics, obstacle configurations, and input constraints, but they may differ in whether the underlying safety filter is realized as a CBF-QP or as a finite-horizon MPC-CBF. Details of the compared baselines are provided in \autoref{subsec:baselines}. 

We use MPC-CBF as the default safety-filter realization for the distance-based ICCBF benchmarks because its finite-horizon structure evaluates the effect of future states and control inputs, rather than enforcing only an instantaneous CBF-QP condition. This non-myopic structure is well matched to OA-CBF, since the proposed adaptation also evaluates candidate CBF parameters through finite-horizon prediction and local validation. This choice is not restrictive: in the DPCBF benchmark, where the original formulation is implemented as a CBF-QP, we apply the same OA-CBF verification and selection mechanism to the QP-based safety filter. Thus, the proposed adaptation can be coupled with both MPC-CBF and CBF-QP controllers, while the MPC-CBF realization is empirically better aligned with the distance-based ICCBF experiments considered here.

For MPC-CBF controllers, the stage and terminal costs are quadratic tracking costs that drive the robot toward a goal state or waypoint. The cost matrices are tuned once for each dynamical system so that, in obstacle-free environments, the nominal MPC controller reaches the goal without excessive overshoot. For QP-based baselines, the nominal input is generated by the hand-tuned PD tracking controller and minimally modified through the CBF-QP safety layer. In all cases, the adapted or fixed CBF parameter enters only through the CBF/ICCBF safety constraint. 

The controller update period is set to $\Delta t=0.05$~s; for MPC-based controllers, this is also the discretization time step over the prediction horizon. The admissible input set is enforced explicitly in each optimization problem. The system-specific input limits are summarized in \autoref{tab:input_limits}.

The offline dataset is generated following the procedure in \autoref{subsec:training}. For each sampled scenario, we sample an initial robot state, an obstacle environment, and a candidate CBF parameter~$\widetilde{\bm{\alpha}}$. We then roll out the corresponding, potentially unsafe, closed-loop system using this candidate parameter and record the resulting safety and performance labels. Infeasible rollouts and collision rollouts are assigned a large risk penalty so that they are separated from marginally safe cases. The training and testing environments are generated independently. In particular, the test environments include obstacle configurations that are not matched to the training distribution, which allows us to evaluate the scalability of the graph-attention encoding.

The PENN model used by OA-CBF consists of $B=3$ ensemble members. Each ensemble member is implemented as a five-layer stochastic multilayer perceptron with hidden widths $40$, $80$, $120$, and $40$ and ReLU activations. For the graph-attention variant, the graph encoder described in \autoref{subsec:graph} is trained jointly with the PENN prediction head and produces a $16$-dimensional robot-context embedding that is concatenated with the candidate CBF parameter. To model observation and dynamics uncertainty during offline data generation, we add $3\%$ i.i.d. Gaussian noise to the neural-network input features, excluding the candidate CBF parameters. The CVaR risk level is set to $\varepsilon=0.05$. The epistemic uncertainty threshold $D_{\mathrm{thr}}$ is calibrated at coverage level $1-\delta_{\mathrm{cal}}=0.95$ using graph-level Jensen-Renyi divergence scores computed from $200{,}000$ calibration trajectories
for each system.

The reported metrics are safety-failure rate, goal-reaching rate, and average reach time. In the tables, the safety-failure rate is denoted by ``Fail'', and is defined as the percentage of trials in which the robot either violates the obstacle-avoidance constraint or the underlying CBF-based optimization becomes infeasible before the goal is reached. Goal-reaching rate is the percentage of trials in which the robot reaches the goal without collision or controller infeasibility within the maximum rollout time. Average reach time is computed only over successful goal-reaching trials; failed, infeasible, and collision trials are excluded from this average.

\subsection{Applied Dynamical Systems}
\label{subsec:applied_dynamical_systems}

We evaluate OA-CBF on five dynamical systems: a dynamic unicycle, a planar quadrotor (Quad2D), a full three-dimensional quadrotor (Quad3D), a kinematic bicycle model, and a VTOL quadplane. The first four systems are used for randomized benchmark comparisons, while the VTOL quadplane is used as a high-dimensional nonlinear case study.

\subsubsection{Dynamic Unicycle}

The dynamic unicycle state is $\vx = [p_x,p_y,\theta,v]^\top$ where $(p_x,p_y)$ is the planar position, $\theta$ is the heading angle, and $v$ is the forward speed. The input is $vu = [a,\omega]^\top$, where $a$ is the longitudinal acceleration and $\omega$ is the angular velocity. The dynamics are
\begin{equation}
\begin{alignedat}{2}
    \dot p_x &= v\cos\theta,
    &\qquad
    \dot p_y &= v\sin\theta, \\
    \dot\theta &= \omega,
    &\qquad
    \dot v &= a.
\end{alignedat}
\label{eq:dynamic_unicycle}
\end{equation}

\subsubsection{Quad2D}
The Quad2D benchmark uses a planar thrust-controlled rotor model with state $\vx=[p_x,p_z,\theta,v_x,v_z,\dot\theta]^\top$, where $(p_x,p_z)$ is the planar position, $\theta$ is the pitch angle, and $(v_x,v_z)$ is the planar velocity. The input is $\vu=[u_{\textup{r}},u_{\textup{l}}]^\top$, where $u_{\textup{r}}$ and $u_{\textup{l}}$ are the right and left rotor thrusts. The implemented dynamics are
\[
\begin{aligned}
\dot p_x &= v_x, &
\dot p_z &= v_z, \\
\dot v_x &= -\frac{\sin\theta}{m}(u_{\textup{r}}+u_{\textup{l}}), &
\dot v_z &= -g+\frac{\cos\theta}{m}(u_{\textup{r}}+u_{\textup{l}}), \\
\ddot\theta &= \frac{r}{I}(u_{\textup{r}}-u_{\textup{l}}).
\end{aligned}
\]

\subsubsection{Quad3D}
The Quad3D benchmark uses a linearized 6-DOF quadrotor model with 12 states and 4 motor-force inputs~\cite{annaswamy_integration_2023}. The state is $\vx =
[\,p_x,p_y,p_z,\phi,\theta,\psi,v_x,v_y,v_z,\dot\phi,\dot\theta,\dot\psi\,]^\top$, where $(p_x,p_y,p_z)$ is the position, $(\phi,\theta,\psi)$ are the roll, pitch, and yaw angles, and $(v_x,v_y,v_z)$ is the translational velocity. The control input is $\vu=[u_1,u_2,u_3,u_4]^\top$, where $u_i$ denotes the $i$-th motor-force deviation from the hover equilibrium. These inputs are mapped to the incremental vertical force and body torques by
\[
\begin{bmatrix}
F\\ \tau_\phi\\ \tau_\theta\\ \tau_\psi
\end{bmatrix}
=
\begin{bmatrix}
1 & 1 & 1 & 1\\
L & 0 & -L & 0\\
0 & L & 0 & -L\\
\nu & -\nu & \nu & -\nu
\end{bmatrix}
\begin{bmatrix}
u_1\\u_2\\u_3\\u_4
\end{bmatrix}.
\]
Here $F$ is the incremental vertical force about hover, $\tau_\theta$, $\tau_\phi$, and $\tau_\psi$ are the pitch, roll, and yaw torques, respectively, $L$ is the quadrotor arm length, and $\nu$ is the rotor yaw-torque coefficient.

The dynamics is
\begin{equation}
\begin{aligned}
    \dot p_x &= v_x, &
    \dot p_y &= v_y, &
    \dot p_z &= v_z,\\
    \dot v_x &= g\theta, &
    \dot v_y &= -g\phi, &
    \dot v_z &= \frac{1}{m}F,\\
    \ddot\phi &= \frac{1}{I_x}\tau_\phi, &
    \ddot\theta &= \frac{1}{I_y}\tau_\theta, &
    \ddot\psi &= \frac{1}{I_z}\tau_\psi .
\end{aligned} \nonumber
\label{eq:quad3d}
\end{equation}
In these equations, $m$ is the quadrotor mass, $g$ is the gravitational acceleration, and $I_x$, $I_y$, and $I_z$ are the moments of inertia about the body-fixed roll, pitch, and yaw axes, respectively.

\begin{table*}[t]
\centering
\caption{CBF candidates used in simulation. The entry $r$ denotes the relative degree of the continuous-time CBF candidate.}
\label{tab:cbf_relative_degree}
\resizebox{\textwidth}{!}{
\begin{tabular}{lccccc}
\toprule
CBF candidate 
& Dynamic Unicycle 
& Quad2D 
& Quad3D 
& Kinematic Bicycle 
& VTOL Quadplane \\
\midrule
Distance / planar obstacle CBF \eqref{eq:distance_cbf_candidate}
& $r=2$, ICCBF 
& $r=2$, ICCBF 
& $r=4$ in continuous time; RK4 discrete-time CBF used
& N/A
& $r=2$, ICCBF \\
DPCBF \eqref{eq:dpcbf_candidate}
& N/A
& N/A
& N/A
& $r=1$, CBF
& N/A \\
\bottomrule
\end{tabular}
}
\end{table*}

\subsubsection{Kinematic Bicycle}
The kinematic bicycle model is used to evaluate OA-CBF with a non-SDF-type CBF candidate. The state is $\vx=[p_x,p_y,\theta,v]^\top$, where $(p_x,p_y)$ denotes the center-of-mass position, $\theta$ is the heading, and $v$ is the forward speed. The physical inputs are the longitudinal acceleration $a$ and the front-wheel steering angle $\delta$. Let $\ell_f$ and $\ell_r$ denote the distances from the center of mass to the front and rear axles. The slip angle is $\beta = \tan^{-1}\!\left( \frac{\ell_r}{\ell_f+\ell_r}\tan\delta \right)$.
Using the small-slip approximation, the model can be written in control-affine form with input $\vu=[a,\beta]^\top$:
\begin{equation}
\underbrace{\begin{bmatrix}
\dot p_x\\
\dot p_y\\
\dot\theta\\
\dot v
\end{bmatrix}}_{\dot{\vx}}
=
\underbrace{\begin{bmatrix}
v\cos\theta\\
v\sin\theta\\
0\\
0
\end{bmatrix}}_{f(\vx)}
+
\underbrace{\begin{bmatrix}
0 & -v\sin\theta\\
0 & \phantom{-}v\cos\theta\\
0 & v/\ell_r\\
1 & 0
\end{bmatrix}}_{g(\vx)}
\underbrace{\begin{bmatrix}
a\\
\beta
\end{bmatrix}}_{\vu}. \nonumber
\label{eq:kinematic_bicycle}
\end{equation}
The dynamic obstacles in this benchmark follow constant-velocity unicycle dynamics. For obstacle $j$, $\vx_{\mathrm{obs}}^j = [p_{\mathrm{obs},x}^j, p_{\mathrm{obs},y}^j, \theta_{\mathrm{obs}}^j, v_{\mathrm{obs}}^j]^\top$,
with $\dot p_{\mathrm{obs},x}^j = v_{\mathrm{obs}}^j\cos\theta_{\mathrm{obs}}^j$, $\dot p_{\mathrm{obs},y}^j = v_{\mathrm{obs}}^j\sin\theta_{\mathrm{obs}}^j$. Obstacle states are assumed to be observable.

\subsubsection{VTOL Quadplane}
We also consider a two-dimensional VTOL quadplane as a case study (see Fig.~\ref{fig:quadplane}). This platform has nonlinear, high-dimensional dynamics and strict input limits, making CBF-parameter tuning particularly challenging during transitions between cruise and hover flight.

The state is $\vx=[\vp^\top,\vv^\top,\theta,q]^\top$, where $\vp=[p_x,p_z]^\top\in\RealSpace^2$ is the inertial-frame position, $\vv\in\RealSpace^2$ is the inertial-frame velocity, $\theta$ is the pitch angle, and $q=\dot\theta$ is the pitch rate. The control input is $\vu=[\delta_f,\delta_r,\delta_p,\delta_e]^\top$, where $\delta_f,\delta_r\in[0,1]$ are the front and rear rotor throttles, $\delta_p\in[0,1]$ is the pusher throttle, and $\delta_e\in[-0.5,0.5]$ is the elevator deflection.

The rigid-body equations are
\begin{equation}
\begin{alignedat}{2}
    \dot{\vp} &= \vv,
    &\qquad
    \dot\theta &= q, \\
    m\dot{\vv} &= -mg\ve_z + \mathbf R(\theta)\vF_{\mathrm{total}},
    &\qquad
    J_y\dot q &= M_{\mathrm{total}}.
\end{alignedat}
\label{eq:vtol_rigid_body}
\end{equation}
where $m$ is the mass, $J_y$ is the pitch moment of inertia, $\ve_z=[0,1]^\top$, $\vF_{\mathrm{total}}$ is the total body-frame force, and $M_{\mathrm{total}}$ is the total pitching moment.

Let $[u,w]^\top=\mathbf R^{-1}(\theta)\vv$ denote the body-frame velocity, $V=\sqrt{u^2+w^2}$ the airspeed, and $\alpha_a=\tan^{-1}(-w/u)$ the angle of attack. We use $\alpha_a$ for angle of attack to avoid conflict with the CBF parameter $\widetilde{\bm{\alpha}}$. The lift coefficient is modeled as a smooth blend between a linear lift curve and a flat-plate model~\cite{willis_pitch_2022},
\begin{equation}
\begin{aligned}
C_L(\alpha_a)
&=
(1-\sigma(\alpha_a))(C_{L_0}+C_{L_\alpha}\alpha_a) \\
& \qquad \qquad
+\sigma(\alpha_a)(2\sin\alpha_a\cos\alpha_a) , \nonumber
\end{aligned}
\label{eq:vtol_lift_coefficient}
\end{equation}
where
\begin{equation}
\sigma(\alpha_a)
=
\frac{
1+e^{-w_s(\alpha_a-\alpha_s)}
 +e^{w_s(\alpha_a+\alpha_s)}
}{
\left(1+e^{-w_s(\alpha_a-\alpha_s)}\right)
\left(1+e^{w_s(\alpha_a+\alpha_s)}\right)
}. \nonumber
\end{equation}
The drag coefficient is
\[
    C_D(\alpha_a)=C_{D_0}+C_{D_{\alpha^2}}\alpha_a^2. 
\]
The baseline aerodynamic force is
\begin{equation}
\vF_0(V,\alpha_a)
=
\frac{1}{2}\rho V^2 S\mathbf R(\alpha_a)
\begin{bmatrix}
-C_D(\alpha_a)\\
C_L(\alpha_a)
\end{bmatrix}, \nonumber
\end{equation}
and the elevator force is
\begin{equation}
\vF_{\delta_e}(V,\alpha_a)\delta_e
=
\frac{1}{2}\rho V^2 S\mathbf R(\alpha_a)
\begin{bmatrix}
-C_{D_{\delta_e}}(\alpha_a)\\
C_{L_{\delta_e}}(\alpha_a)
\end{bmatrix}
\delta_e. \nonumber
\end{equation}
The rotor thrusts are modeled as $T_f=k_f\delta_f$, $T_r=k_r\delta_r$, and $T_p=k_p\delta_p$. Thus,
\begin{equation}
\vF_{\mathrm{total}}
=
\vF_0(V,\alpha_a)
+
\vF_{\delta_e}(V,\alpha_a)\delta_e
+
\begin{bmatrix}
T_p\\
T_f+T_r
\end{bmatrix}. \nonumber
\end{equation}
The total pitching moment $M_{\mathrm{total}}$ is computed from the rotor, elevator, and aerodynamic moment contributions~\cite{beard_small_2012}.

\begin{figure}[tbp]
\centering
\includegraphics[width=0.99\linewidth]{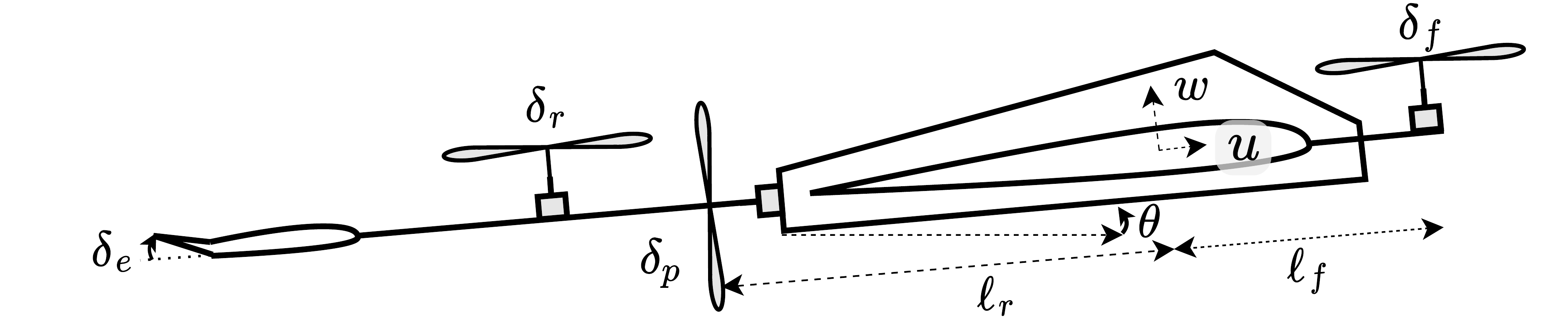}
\caption{
Illustration of a VTOL quadplane aircraft.}
\label{fig:quadplane}
\vspace{-5pt}
\end{figure}

\subsection{CBF Candidates and Relative-Degree Structure}
\label{subsec:cbf_candidates}

For all systems except the kinematic bicycle benchmark, we use a distance-based collision-avoidance candidate CBF. For obstacle $j$, let $\vp_{\mathrm{obs},j}$ be the obstacle position and let $\ell_{\mathrm{robot}}$ and $\ell_{\mathrm{obs},j}$ be the robot and obstacle radii. The candidate CBF is
\begin{equation}
    \tilde h_j(\vx)
    =
    \|\mathbf P\vp-\vp_{\mathrm{obs},j}\|^2
    -
    (\ell_{\mathrm{robot}}+\ell_{\mathrm{obs},j})^2,
    \label{eq:distance_cbf_candidate}
\end{equation}
where $\mathbf P$ selects the coordinates used for collision avoidance. For planar systems, $\mathbf P$ is the identity. For Quad3D, $\mathbf P$ projects the three-dimensional position onto the horizontal $(x,y)$ plane, so the unsafe set is a planar zone independent of altitude. The CBF-based controllers enforce the corresponding safety constraint for each obstacle independently.

For the dynamic unicycle, Quad2D, and VTOL quadplane, the distance-based candidate has relative degree two. Therefore, we use an ICCBF with two linear class-$\mathcal K_\infty$ functions,
\[
    \alpha_i(s)=\widetilde{\alpha}_i s,
    \qquad i\in\{1,2\},
\]
and adapt $\widetilde{\bm{\alpha}} =[\widetilde{\alpha}_1,\widetilde{\alpha}_2]^\top$ online. 

\begin{table*}[t] 
\centering
\caption{Performance comparison of different approaches across three distinct system dynamics over 100 trials. Metrics include safety-failure rate (Fail), goal-reaching rate (Reach), and average reach time. $^\dagger$ For Quad3D, the planar obstacle-avoidance constraint has relative degree four in continuous time; for the signed-clearance CBF-QP realization, the obstacle function is not four-times differentiable, so the corresponding continuous-time fourth-order CBF-QP constraint is not constructed.}
\begin{tabular}{l|rrr|rrr|rrr}
\toprule
 & \multicolumn{3}{c|}{\textbf{Dynamic Unicycle}} & \multicolumn{3}{c|}{\textbf{Quad2D}} & \multicolumn{3}{c}{\textbf{Quad3D}} \\
\cmidrule(r){2-4} \cmidrule(lr){5-7} \cmidrule(l){8-10} 
\textbf{Method} & \textbf{Fail} & \textbf{Reach} & \textbf{Time [s]} & \textbf{Fail} & \textbf{Reach} & \textbf{Time [s]} & \textbf{Fail} & \textbf{Reach} & \textbf{Time [s]} \\
\midrule
Fixed Low & \textbf{0 \%} & \textbf{100 \%} & 31.052 & 1 \% & 99 \% & 29.186 & \textbf{0 \%} & \textbf{100 \%} & 24.603 \\
Fixed High & 39 \% & 61 \% & 10.270 & 4 \% & 85 \% & \textbf{11.645} & 21 \% & 79 \% & \textbf{12.484} \\
OD-CBF w/ QP~\cite{zeng_safetycritical_2021_decay} & 39 \% & 36 \% & 32.250 & 9 \% & 0 \% & N/A & \multicolumn{3}{c}{N/A (not $C^4$)$^\dagger$} \\
OD-CBF w/ MPC~\cite{zeng_enhancing_2021} & \textbf{0 \%} & \textbf{100 \%} & 29.213 & 4 \% & 96 \% & 28.590 & \textbf{0 \%} & \textbf{100 \%} & 22.727 \\
BarrierNet~\cite{xiao_barriernet_2023} & 60 \% & 40 \% & 8.105 & 29 \% & 0 \% & N/A & \multicolumn{3}{c}{N/A (not $C^4$)$^\dagger$} \\
OA-CBF w/ FC \cite{kim_learning_2025, kim_how_2025} & 11 \% & 89 \% & 23.876 & 11 \% & 89 \% & 16.229 & 2 \% & 95 \% & 16.965 \\
\midrule
\rowcolor{gray!20} OA-CBF w/ GAT & \textbf{0 \%} & \textbf{100 \%} & 10.832 & \textbf{0 \%} & \textbf{100 \%} & 16.753 & \textbf{0 \%} & \textbf{100 \%} & 15.051 \\
\bottomrule
\end{tabular}
\vspace{-0pt} 
\label{tab:sdf_results}
\end{table*}

For Quad3D, the same planar obstacle-avoidance constraint has relative degree four under the full rigid-body actuation model. Hence, a continuous-time HOCBF/ICCBF or CBF-QP realization would require constructing a fourth-order CBF constraint from a four-times differentiable obstacle function. For the signed-distance representation used by the CBF-QP baselines, this differentiability requirement is not satisfied; therefore, we do not instantiate the continuous-time CBF-QP baselines on Quad3D. The Quad3D MPC-based methods instead use the sampled-data RK4 CBF formulation of \cite{taylor_safety_2022}, which produces a discrete-time CBF constraint that can be imposed directly in the MPC controller.

To address \textbf{(Q4)}, we also evaluate OA-CBF with a non-distance-based CBF on the kinematic bicycle model. Specifically, we use a Dynamic Parabolic CBF (DPCBF) \cite{park_collision_2026}. Let
\[
    \vp_{\mathrm{rel}}
    =
    \begin{bmatrix}
    p_{\mathrm{obs},x}-p_x\\
    p_{\mathrm{obs},y}-p_y
    \end{bmatrix},
\]
and let $\vv_{\mathrm{rel}}$ be the relative velocity between the obstacle and the robot. Define the line-of-sight angle
\[
    \psi_{\mathrm{los}}
    =
    \operatorname{atan2}(p_{\mathrm{rel},y},p_{\mathrm{rel},x})
\]
and rotate the relative velocity into the line-of-sight frame:
\[
    \tilde{\vv}_{\mathrm{rel}}
    =
    \begin{bmatrix}
    \tilde v_{\mathrm{rel},x}\\
    \tilde v_{\mathrm{rel},y}
    \end{bmatrix}
    =
    \mathbf R(-\psi_{\mathrm{los}})\vv_{\mathrm{rel}}.
\]

Let $r_j=\ell_{\mathrm{robot}}+\ell_{\mathrm{obs},j}$ and
\[
    d_j(\vx)
    =
    \sqrt{\|\vp_{\mathrm{rel}}\|^2-r_j^2},
\]
which is evaluated on the collision-free domain
$\|\vp_{\mathrm{rel}}\|>r_j$. The DPCBF uses two state-dependent design maps,
\[
    \lambda_j(\vx)
    =
    k_\lambda
    \frac{d_j(\vx)}{\|\vv_{\mathrm{rel}}\|},
    \qquad
    \mu_j(\vx)
    =
    k_\mu d_j(\vx),
\]
where $k_\lambda,k_\mu>0$. The CBF candidate is
\begin{equation}
    h_j^{\mathrm{DPCBF}}(\vx)
    =
    \tilde v_{\mathrm{rel},x}
    +
    \lambda_j(\vx)\tilde v_{\mathrm{rel},y}^2
    +
    \mu_j(\vx).
    \label{eq:dpcbf_candidate}
\end{equation}
Unlike the distance-based CBF in \eqref{eq:distance_cbf_candidate}, this candidate evaluates safety in the relative-velocity plane. The parabolic boundary shifts and changes curvature as the relative distance and relative speed vary, so the controller does not treat all motion toward an obstacle as immediately unsafe. For the kinematic bicycle model, this DPCBF has relative degree one. The CBF candidates and their corresponding relative-degree structures are summarized in~\autoref{tab:cbf_relative_degree}.

\begin{figure*}[t]
\centering
\includegraphics[width=0.95\textwidth]{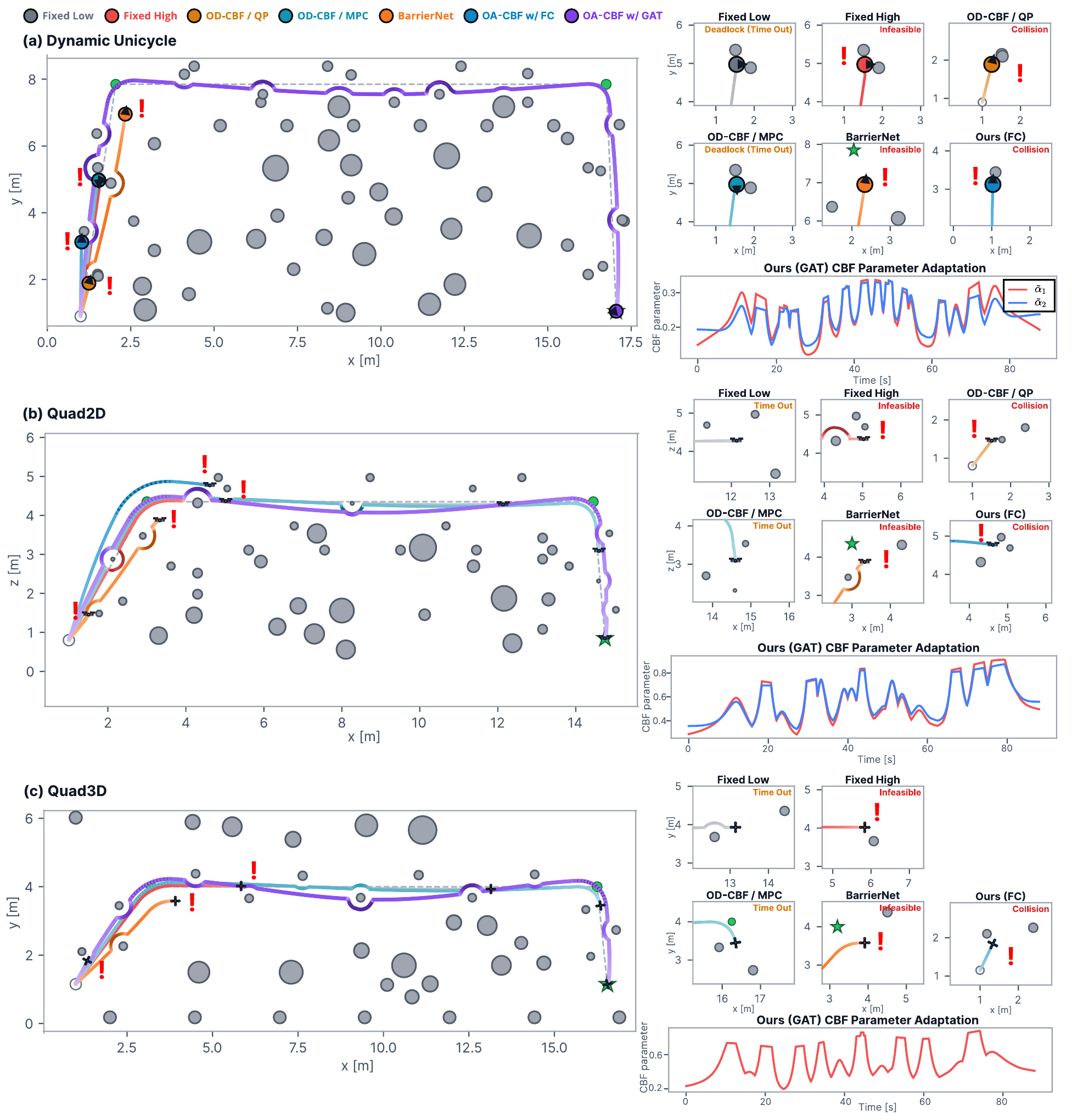}
\caption{Representative trajectories for the distance-based ICCBF benchmarks in \autoref{tab:sdf_results}: (a) dynamic unicycle, (b) Quad2D, and (c) Quad3D. Gray circles denote obstacles, and trajectory color encodes robot speed, with more saturated colors indicating faster motion. Green circles and stars denote reached and next waypoints, respectively. The right panels show representative terminal snapshots for each baseline, labeled by the corresponding outcome: safe, infeasible, collision, or timeout. The bottom plots show the adapted CBF parameters for OA-CBF with graph attention.}
\label{fig:traj}
\vspace{-5pt}
\end{figure*}

\subsection{Baselines}
\label{subsec:baselines}

We compare OA-CBF against fixed-parameter, optimal-decay, and learning-based baselines. \textbf{Fixed Low} uses conservative fixed CBF parameters, while \textbf{Fixed High} uses aggressive fixed parameters. \textbf{OD-CBF with QP} implements the optimal-decay CBF-QP method~\cite{zeng_safetycritical_2021_decay,ong_properties_2025}, and \textbf{OD-CBF with MPC} implements the optimal-decay MPC-CBF method of~\cite{zeng_enhancing_2021}. For relative-degree-two CBFs, both CBF parameters are optimized in optimal-decay baselines. \textbf{BarrierNet}~\cite{xiao_barriernet_2023} is used as a differentiable CBF-QP-layer baseline when the corresponding continuous-time CBF-QP constraint is well defined. We also compare against OA-CBF with fully connected closest-obstacle encoding, denoted \textbf{OA-CBF w/ FC}, to isolate the effect of replacing heuristic obstacle selection with the proposed graph-attention encoder. This baseline follows the implementation provided by \cite{kim_learning_2025,kim_how_2025}. Our proposed method is denoted \textbf{OA-CBF w/ GAT}.  Detailed baseline settings and implementation details are provided in Appendix~\ref{app:baseline_details}.

For OA-CBF, candidate parameters are sampled from a system-specific adaptation interval $\widetilde{\bm{\alpha}}\in[\widetilde{\alpha}_{\min},\widetilde{\alpha}_{\max}]$, and the initially applied parameter is set to the lower endpoint unless stated otherwise. The candidate is accepted only if it passes both the epistemic uncertainty gate and the distributionally robust CVaR constraint described in \autoref{sec:online}. Among the accepted candidates, the parameter with the lowest predicted performance cost is applied.

\begin{figure*}[t]
\centering
\includegraphics[width=0.99\textwidth]{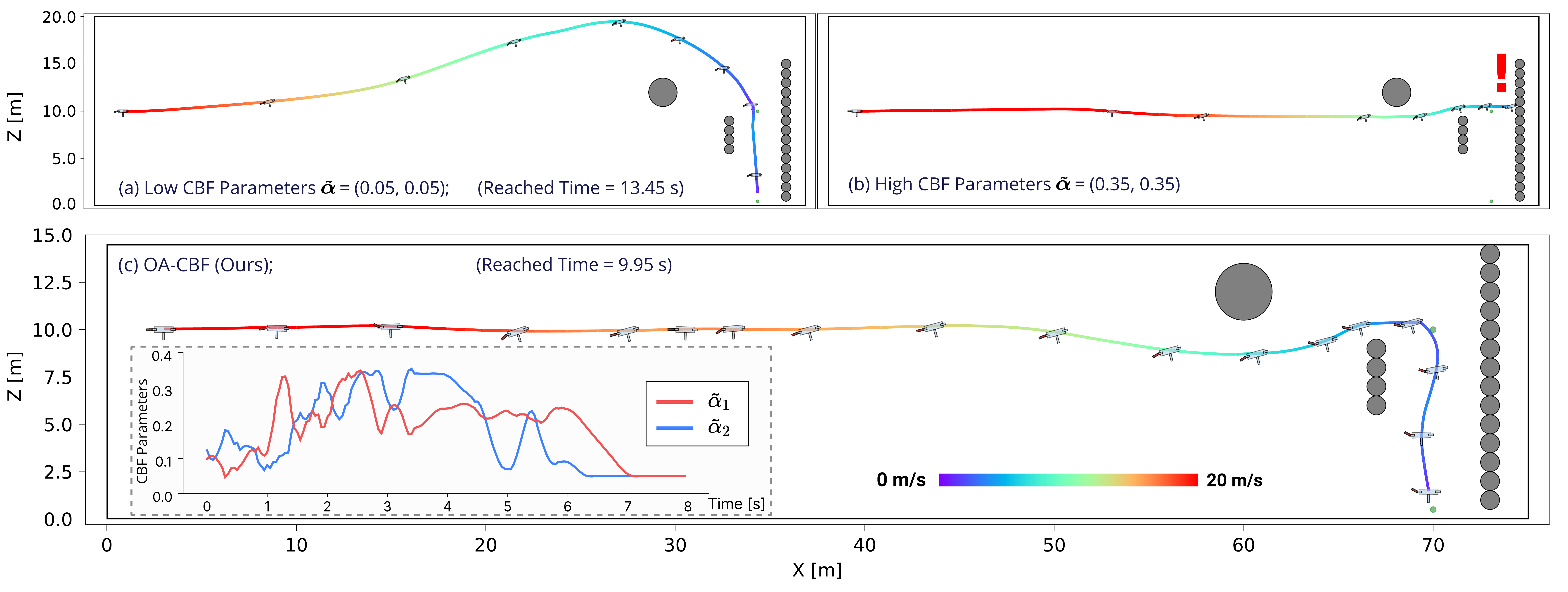}
\caption{
The figure shows the aircraft trajectory along with its speed profile, rigid-body pose, and elevator angle $\delta_e$, with obstacles depicted in gray. (a) With fixed low parameters, the trajectory shows a significant altitude detour as the ICCBF constraints force the aircraft to pitch up to decelerate in response to the obstacle. (b) With fixed high parameters, the controller becomes infeasible, ultimately resulting in a collision. (c) Our adaptive approach dynamically adjusts the CBF parameters based on the aircraft's speed and position. Initially, due to the high speed, it maintains low CBF parameters, prompting the elevator to pitch up and generate additional drag. As the aircraft slows down, the parameters increase to enhance performance, enabling the aircraft to fly safely beneath the obstacle and executing a smooth transition afterward. \textbf{Inset}: Evolution of the adapted CBF parameters over time.}
\label{fig:vtol_results}
\vspace{-5pt}
\end{figure*}

\subsection{Results and Discussion}
\label{subsec:results_discussion}

\subsubsection{\textbf{(Q1)-(Q2)}: Safety and Conservatism Across Nonlinear Systems}

\autoref{tab:sdf_results} compares the methods on the dynamic unicycle, Quad2D, and Quad3D benchmarks using distance-based obstacle-avoidance CBFs. Representative trajectories for each dynamical system are shown in Fig.~\ref{fig:traj}, illustrating how each method behaves in the same obstacle-rich environments.

The fixed-low controller is generally safe but conservative. It achieves safety-failure rates of $0\%$, $1\%$, and $0\%$ on the dynamic unicycle, Quad2D, and Quad3D benchmarks, respectively, but the corresponding average reach times are $31.052$~s, $29.186$~s, and $24.603$~s. This behavior is consistent with the expected conservatism of small fixed CBF parameters. 

The fixed-high controller reduces the reach time on successful trials, but it loses safety and feasibility. Its safety-failure rates are $39\%$ on the dynamic unicycle, $4\%$ on Quad2D, and $21\%$ on Quad3D. These results show that a single aggressive CBF parameter cannot reliably handle all operating conditions under input constraints. In particular, a parameter that is useful far from obstacles can become unsafe when the system approaches the boundary of the safe set with limited control authority. 

The optimal-decay baselines partially address this issue, but their performance is not uniform across systems. OD-CBF with QP has a $39\%$ safety-failure rate on the dynamic unicycle and fails to reach the goal in the Quad2D benchmark. OD-CBF with MPC improves feasibility relative to the QP, but it remains slower than OA-CBF on the safe trials and still has a nonzero safety-failure rate in the Quad2D benchmark. 

BarrierNet exhibits a different failure mode. Because it learns task-directed reference-control and CBF-related quantities before solving a pointwise CBF-QP, its successful trials can be fast; for example, it reaches the goal in $8.105$~s on the dynamic unicycle. However, this reach time is conditional on the $40\%$ of trials that successfully reach the goal. The same method has safety-failure rates of $60\%$ on the dynamic unicycle and $29\%$ on Quad2D, and it has no successful goal-reaching trials on Quad2D. Thus, BarrierNet's fast reach time should be interpreted as aggressive behavior on the subset of feasible, non-colliding trials, rather than as reliable safety under the benchmark input constraints. 

In contrast, OA-CBF with graph-attention encoding achieves $0\%$ safety-failure rate and $100\%$ goal-reaching rate on all three benchmarks. Its average reach times are $10.832$~s on the dynamic unicycle, $16.753$~s on Quad2D, and $15.051$~s on Quad3D. Among the methods with zero safety failures, OA-CBF w/ GAT gives the shortest reach time in all three benchmarks. These results support \textbf{(Q1)} and \textbf{(Q2)}: OA-CBF can identify locally validated CBF parameters online, and the resulting adaptation substantially reduces conservatism compared with fixed low parameters while preserving safety.

\subsubsection{VTOL Quadplane Case Study}

Fig.~\ref{fig:vtol_results} shows the VTOL quadplane case study. The quadplane starts in cruise mode with a high initial horizontal velocity, transitions toward hover mode near the intermediate waypoint, and then lands while avoiding obstacles. The initial position and horizontal velocity are $(p_{x,0},p_{z,0})=(2,10)~\text{m}, v_{x,0}=20~\text{m/s}$. The intermediate waypoint is $(70,10)$~m and the landing goal is $(70,0)$~m.

With fixed low CBF parameters, the controller remains safe but reacts conservatively to the obstacle. The aircraft pitches upward and takes a large altitude detour to reduce speed before passing the obstacle. With fixed high CBF parameters, the controller is less conservative initially but becomes infeasible near the obstacle, resulting in collision. OA-CBF adapts the CBF parameters based on the current speed and position of the aircraft. At high speed, it keeps the parameters small enough to preserve feasibility and induce early deceleration. Once the aircraft slows down, the parameters increase, reducing conservatism and allowing the aircraft to pass below the obstacle before transitioning smoothly to hover and landing.

In the simulation shown in Fig.~\ref{fig:vtol_results}, OA-CBF reaches the goal in approximately $9.95$~s, compared with approximately $13.45$~s for the fixed-low baseline, while the fixed-high baseline collides. This case study further supports \textbf{(Q1)} and \textbf{(Q2)} on a high-dimensional nonlinear system with aerodynamic effects and strict input constraints, showing that the same finite-horizon validation principle can reduce conservatism during the challenging mode-transition for VTOL.

\begin{table}[t] 
\centering
\caption{Performance comparison of different approaches on the kinematic bicycle model with DPCBF in dynamic environments over 100 trials. Metrics include safety-failure rate (Fail), goal-reaching rate (Reach), and average reach time.}
\begin{tabular}{l|rrr} 
\toprule
 & \multicolumn{3}{c}{\textbf{Kinematic Bicycle w/ DPCBF}} \\
\cmidrule(r){2-4}
\textbf{Method} & \textbf{Fail} & \textbf{Reach} & \textbf{Time [s]} \\
\midrule
Fixed Low & \textbf{0 \%} & 85 \% & 19.900 \\
Fixed High & 84 \% & 16 \% & 4.228 \\
OD-CBF w/ QP~\cite{zeng_safetycritical_2021_decay} & 21 \% & 79 \% & 5.164 \\
BarrierNet~\cite{xiao_barriernet_2023} & 75 \% & 25 \% & \textbf{3.954} \\
OA-CBF w/ FC \cite{kim_learning_2025, kim_how_2025} & 2 \% & 98 \% & 8.556 \\
\midrule
\rowcolor{gray!20} OA-CBF w/ GAT & 1 \% & \textbf{99 \%} & 9.753 \\
\bottomrule
\end{tabular}
\label{tab:dpcbf_results}
\end{table}

\subsubsection{\textbf{(Q3)}: Effect of Graph-Attention Encoding}

The comparison between OA-CBF w/ FC and OA-CBF w/ GAT serves as the encoder ablation in this section: both variants use the same offline labels, PENN ensemble size, uncertainty filtering, and online candidate-selection rule, but differ in whether the environment is represented by the closest-obstacle feature vector or by graph-attention aggregation.

In \autoref{tab:sdf_results}, OA-CBF w/ FC has safety-failure rates of $11\%$, $11\%$, and $2\%$ on the dynamic unicycle, Quad2D, and Quad3D benchmarks, respectively. Replacing the closest-obstacle FC encoder with the proposed graph-attention encoder reduces the safety-failure rate to $0\%$ on all three systems and increases the goal-reaching rate to $100\%$. This improvement indicates that graph-attention aggregation provides a more reliable embedding for evaluating candidate CBF parameters when multiple obstacles may be simultaneously relevant.

These results support \textbf{(Q3)}: the proposed graph-attention encoding improves the scalability of OA-CBF to new obstacle configurations without requiring a fixed-size obstacle vector or heuristic obstacle selection. %

\begin{figure*}[t]
\centering
\includegraphics[width=0.99\textwidth]{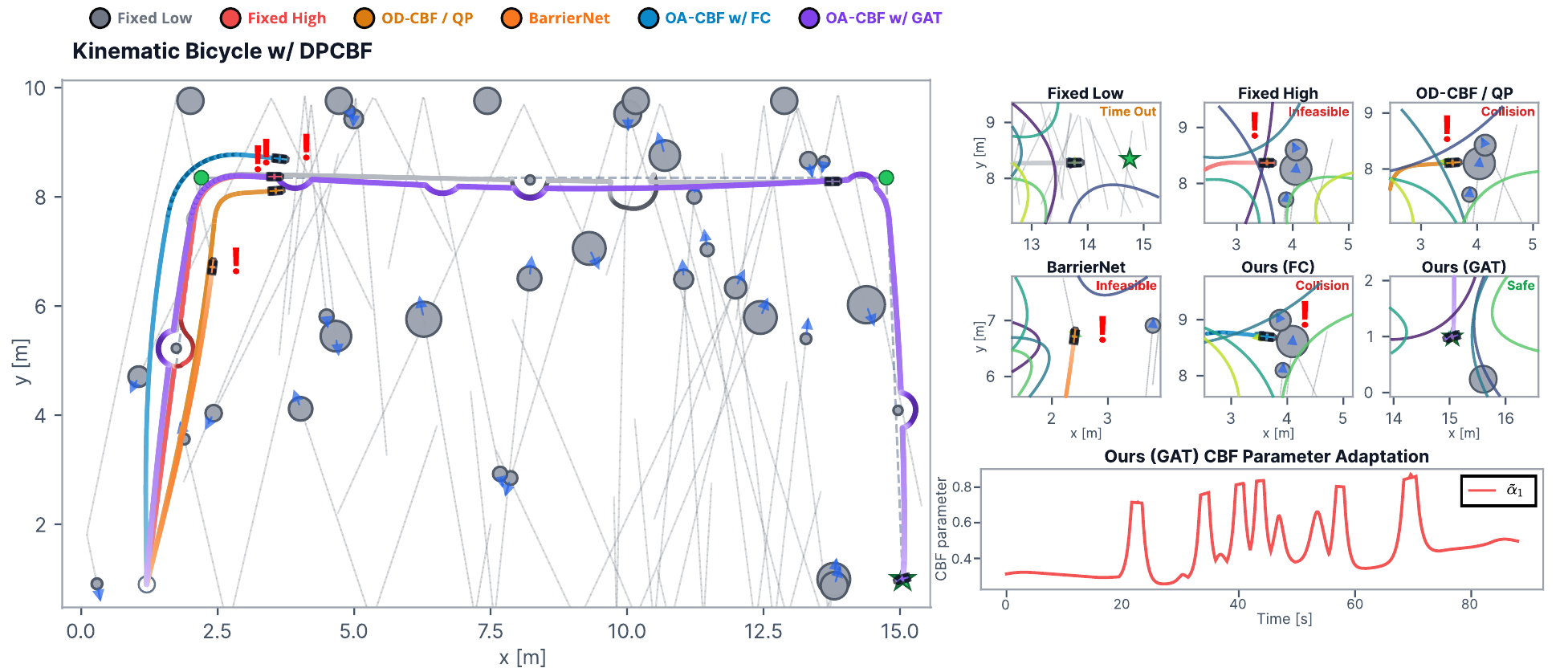}
\caption{
Representative trajectories for the kinematic bicycle benchmark with DPCBF in Table~\ref{tab:dpcbf_results}. The DPCBF safety boundary associated with each obstacle is also visualized on the right panels.}
\label{fig:dpcbf_traj}
\vspace{-5pt}
\end{figure*}

\subsubsection{\textbf{(Q4)}: Adaptation with a Non-Distance-Based CBF}

\autoref{tab:dpcbf_results} evaluates OA-CBF on the kinematic bicycle model with the DPCBF candidate in \eqref{eq:dpcbf_candidate}. This experiment tests whether OA-CBF is tied to classical distance-based CBFs or can also be applied to CBF formulations whose safety semantics are defined in the relative-velocity space. Representative trajectories for the DPCBF benchmark are shown in Fig.~\ref{fig:dpcbf_traj}, illustrating the behavior of each method in the dynamic-obstacle setting.

The fixed-low controller remains collision-free, but reaches the goal in only $85\%$ of trials and requires an average of $19.900$~s, reflecting the conservatism of a fixed low parameter. In contrast, the fixed-high controller reaches quickly when successful, but has an $84\%$ safety-failure rate. This again illustrates that an aggressive fixed parameter can improve conditional reach time while substantially degrading feasibility and safety.

The QP-based adaptive baselines are faster on their successful DPCBF trials, but they still exhibit substantial safety failures. OD-CBF with QP reaches the goal in $5.164$~s on successful trials, but has a $21\%$ safety-failure rate. BarrierNet has the shortest conditional reach time, $3.954$~s, but succeeds in only $25\%$ of trials and has a $75\%$ safety-failure rate. This behavior is consistent with a learned pointwise QP controller that can generate aggressive goal-directed actions on feasible instances, but does not reliably preserve feasibility when the deployment-time hard input bounds are imposed.

OA-CBF substantially reduces the safety-failure rate while improving goal-reaching performance. OA-CBF w/ FC achieves a $2\%$ safety-failure rate and $98\%$ goal-reaching rate, while OA-CBF w/ GAT further improves these values to $1\%$ and $99\%$, respectively. Although OA-CBF does not eliminate all failures under the current DPCBF tuning, it reduces the safety-failure rate from $84\%$ for the aggressive fixed parameter, $21\%$ for OD-CBF-QP, and $75\%$ for BarrierNet to $1\%$, while achieving the highest goal-reaching rate. These results support \textbf{(Q4)}, demonstrating that the proposed uncertainty-aware parameter validation mechanism is not restricted to signed-distance-function-type CBF candidates.

Taken together, \autoref{tab:sdf_results}, \autoref{tab:dpcbf_results}, and Fig.~\ref{fig:vtol_results} answer \textbf{(Q1)}-\textbf{(Q4)}. OA-CBF identifies locally validated parameters online, reduces conservatism relative to safe fixed-parameter controllers, avoids many of the safety and feasibility failures observed with aggressive or pointwise-QP adaptations, benefits from graph-attention encoding in multi-obstacle environments, and applies beyond distance-based CBF candidates.

\section{CONCLUSION \label{sec:conclusion}}
In this paper, we studied online CBF-parameter adaptation for nonlinear robotic systems subject to hard input constraints. The central issue is that CBF parameters simultaneously affect safety-filter feasibility, the shape of the inner safe set, and the conservatism of the resulting closed-loop behavior; as a result, fixed conservative parameters can slow task execution, whereas aggressive parameters can lead to collision or controller infeasibility. To address this issue, we proposed OA-CBF, which evaluates candidate CBF parameters through finite-horizon local validation and applies only those that pass uncertainty-aware filtering. The proposed implementation uses a graph-attention-based probabilistic ensemble model to predict safety-risk and performance metrics, rejects unreliable predictions through a conformal epistemic gate, screens aleatoric risk through a distributionally robust CVaR test, and selects the verified candidate with the best predicted progress. The resulting framework can be coupled with both MPC-CBF and CBF-QP safety filters and is shown in simulation to reduce conservatism while maintaining low safety-failure rates across distance-based ICCBF and DPCBF benchmarks.

The present framework has two main limitations. First, the practical quality of the adaptation depends on the coverage of the offline data and the fidelity of the finite-horizon prediction model; in poorly covered operating regimes, the epistemic gate may reject many candidates, leading to conservative behavior. Second, the current validation is performed in simulation, and hardware deployment will require additional treatment of sensing errors, unmodeled dynamics, and real-time optimization latency.

\bibliographystyle{IEEEtran}
\typeout{}
\bibliography{references.bib}

\appendix
\subsection{Discrete-Time ICCBF and MPC-CBF Formulation}
\label{app:discrete_time}

The main development of this paper is stated in continuous time to keep the local-validation and uncertainty-aware adaptation arguments independent of a particular numerical implementation. In sampled-data implementations, the same CBF parameter can be used inside a discrete-time CBF or MPC-CBF safety filter. This appendix describes the corresponding discrete-time formulation used when the controller \(\pi(\cdot;\bm{\alpha})\) is realized by MPC. The online adaptation procedure in \autoref{sec:online} remains unchanged: fixed parameters \(\bm{\alpha}\) in the safety-filter constraint are replaced by the adapted parameter \(\widetilde{\bm{\alpha}}\).\footnote{With a slight abuse of notation, the symbols \(h\), \(b_i\), \(\calC_i\), \(\calC^{\star}\), and related sets/functions are reused in Appendix~\ref{app:discrete_time} for their discrete-time counterparts. Throughout the main text, these symbols refer to the continuous-time definitions.}

Consider the discrete-time nonlinear system
\begin{equation}
    \vx_{k+1}=f_d(\vx_k,\vu_k),
    \label{eq:dt-dynamics}
\end{equation}
where \(\vx_k\in\StateSpace_d\subset\RealSpace^n\) and \(\vu_k\in\ControlSpace_d\subset\RealSpace^m\) denote the state and admissible control input at time step \(k\in\mathbb{Z}_{\ge 0}\), respectively. The map \(f_d:\StateSpace_d\times\ControlSpace_d\to\StateSpace_d\) is assumed to be locally Lipschitz continuous. When \eqref{eq:dt-dynamics} is obtained from a continuous-time model, \(f_d\) denotes the chosen sampled-data flow map over one control period, e.g., the one-step Runge--Kutta integration map used in implementation. For any scalar function \(q:\StateSpace_d\to\RealSpace\), define
\begin{equation}
    \Delta q(\vx_k,\vu_k) \coloneqq q(f_d(\vx_k,\vu_k))-q(\vx_k).
\end{equation}

\begin{definition}[Discrete-Time CBF~\cite{agrawal_discrete_2017}]
\label{def:dt-cbf}
Let \(\calC=\{\vx\in\StateSpace_d\mid h(\vx)\ge 0\}\subseteq\calS\), where \(h:\StateSpace_d\to\RealSpace\) is continuous. The function \(h\) is a discrete-time CBF for \eqref{eq:dt-dynamics} on \(\calC\) if there exists a class-\(\calK\) function \(\alpha\) satisfying \(\alpha(z)<z\) for all \(z>0\) such that
\begin{equation}
    \sup_{\vu_k\in\ControlSpace_d}
    \left[\Delta h(\vx_k,\vu_k)+\alpha(h(\vx_k))\right]
    \ge 0,
    \qquad \forall \vx_k\in\calC .
    \label{eq:dt-cbf-condition}
\end{equation}
\end{definition}

The corresponding admissible-control set is
\begin{equation}
    K_{\mathrm{dtcbf}}(\vx_k;\alpha)
    \coloneqq
    \left\{\vu_k\in\ControlSpace_d\mid
    \Delta h(\vx_k,\vu_k)+\alpha(h(\vx_k))\ge 0
    \right\}. \nonumber
    \label{eq:k-dtcbf} 
\end{equation}

\begin{lemma}
\label{lem:dt-cbf}
Suppose that \(h\) is a discrete-time CBF for \eqref{eq:dt-dynamics} with parameter \(\alpha\). If \(\vx_0\in\calC\) and \(\vu_k\in K_{\mathrm{dtcbf}}(\vx_k;\alpha)\) for all \(k\ge 0\), then \(\vx_k\in\calC\) for all \(k\ge 0\).
\end{lemma}

\begin{proof}
If \(\vx_k\in\calC\), then \(h(\vx_k)\ge 0\). Since \(\vu_k\in K_{\mathrm{dtcbf}}(\vx_k;\alpha)\),
\[
    h(\vx_{k+1})
    \ge h(\vx_k)-\alpha(h(\vx_k))
    \ge 0,
\]
where the last inequality follows from \(\alpha(z)<z\) for \(z>0\) and \(\alpha(0)=0\). Thus \(\vx_{k+1}\in\calC\). Induction proves the result.
\end{proof}

\begin{remark}
A common choice is \(\alpha(s)=\gamma s\), where \(\gamma\in(0,1)\) for the strict condition in \autoref{def:dt-cbf}. Then the discrete-time CBF condition gives \(h(\vx_{k+1})\ge(1-\gamma)h(\vx_k)\), and hence \(h(\vx_k)\ge(1-\gamma)^k h(\vx_0)\) for all \(k\ge0\). The limiting choice \(\gamma=1\) still preserves forward invariance and recovers the standard one-step discrete-time exponential CBF condition.
\label{rem:linear_dt_cbf}
\end{remark}

We next state the discrete-time counterpart of the ICCBF construction in \autoref{subsec:iccbf}. Let \(h\) be the candidate safety function and let \(r\) denote the order used in the discrete-time ICCBF construction. Define \(b_i:\StateSpace_d\to\RealSpace\), \(i=0,\ldots,r-1\), recursively as
\begin{subequations}
\label{eq:dt-iccbf-functions}
\begin{align}
    b_0(\vx_k;\bm{\alpha})
    &\coloneqq h(\vx_k), \\
    b_i(\vx_k;\bm{\alpha})
    &\coloneqq
    \inf_{\vu_k\in\ControlSpace_d}
    \left[
    \Delta b_{i-1}(\vx_k,\vu_k;\bm{\alpha})
    \right]
    \notag\\[-0.2em]
    &\quad
    +\alpha_i\!\left(b_{i-1}(\vx_k;\bm{\alpha})\right),
    \quad i=1,\ldots,r-1 .
\end{align}
\end{subequations}
The terminal constraint function is
\begin{equation}
    b_r(\vx_k,\vu_k;\bm{\alpha})
    \coloneqq
    \Delta b_{r-1}(\vx_k,\vu_k;\bm{\alpha})
    +\alpha_r\!\left(b_{r-1}(\vx_k;\bm{\alpha})\right), \nonumber
    \label{eq:dt-iccbf-constraint}
\end{equation}
where \(\bm{\alpha}=\{\alpha_1,\ldots,\alpha_r\}\) is a set of class-\(\calK\) functions satisfying \(\alpha_i(z)<z\) for all \(z>0\).

Define
\begin{subequations}
\label{eq:dt-iccbf-sets}
\begin{align}
    \calC_0(\bm{\alpha})
    &\coloneqq
    \{\vx\in\StateSpace_d\mid b_0(\vx;\bm{\alpha})\ge 0\}\subseteq\calS, \nonumber \\
    \calC_i(\bm{\alpha}) 
    &\coloneqq
    \{\vx\in\StateSpace_d\mid b_i(\vx;\bm{\alpha})\ge 0\},
    \quad i=1,\ldots,r-1, \nonumber
\end{align}
\end{subequations}
and the corresponding discrete-time inner safe set
\begin{equation}
    \calC^{\star}(\bm{\alpha})
    \coloneqq
    \bigcap_{i=0}^{r-1}\calC_i(\bm{\alpha}) . \nonumber
    \label{eq:dt-inner-safe-set}
\end{equation}

\begin{definition}[Discrete-Time ICCBF]
\label{def:dt-iccbf}
The function \(h\) is a discrete-time ICCBF on \(\calC^{\star}(\bm{\alpha})\) for \eqref{eq:dt-dynamics} if there exist class-\(\calK\) functions \(\bm{\alpha}=\{\alpha_1,\ldots,\alpha_r\}\) such that
\begin{equation}
    \sup_{\vu_k\in\ControlSpace_d}
    b_r(\vx_k,\vu_k;\bm{\alpha})
    \ge 0,
    \qquad
    \forall \vx_k\in\calC^{\star}(\bm{\alpha}) . \nonumber
    \label{eq:dt-iccbf-condition}
\end{equation}
\end{definition}

The set of admissible controls satisfying the discrete-time ICCBF constraint is
\begin{equation}
    K_{\mathrm{dticcbf}}(\vx_k;\bm{\alpha})
    \coloneqq
    \left\{\vu_k\in\ControlSpace_d\mid
    b_r(\vx_k,\vu_k;\bm{\alpha})\ge 0
    \right\}. \nonumber
    \label{eq:k-dticcbf}
\end{equation}

\begin{lemma}
\label{lem:dt-iccbf}
Suppose that \(h\) is a discrete-time ICCBF for \eqref{eq:dt-dynamics} with parameter \(\bm{\alpha}\). If \(\vx_0\in\calC^{\star}(\bm{\alpha})\) and \(\vu_k\in K_{\mathrm{dticcbf}}(\vx_k;\bm{\alpha})\) for all \(k\ge 0\), then \(\vx_k\in\calC^{\star}(\bm{\alpha})\subseteq\calS\) for all \(k\ge0\).
\end{lemma}

\begin{proof}
Assume \(\vx_k\in\calC^{\star}(\bm{\alpha})\). Since \(\vu_k\in K_{\mathrm{dticcbf}}(\vx_k;\bm{\alpha})\),
\[
    b_{r-1}(\vx_{k+1};\bm{\alpha})
    \ge
    b_{r-1}(\vx_k;\bm{\alpha})
    -\alpha_r(b_{r-1}(\vx_k;\bm{\alpha}))
    \ge 0.
\]
Moreover, for each \(i=1,\ldots,r-1\), the condition \(b_i(\vx_k;\bm{\alpha})\ge0\), together with the definition of \(b_i\) in \eqref{eq:dt-iccbf-functions}, implies
\[
    \Delta b_{i-1}(\vx_k,\vu;\bm{\alpha})
    +\alpha_i(b_{i-1}(\vx_k;\bm{\alpha}))
    \ge0,
    \qquad \forall \vu\in\ControlSpace_d .
\]
This inequality holds in particular for the applied input \(\vu_k\), and therefore
\[
    b_{i-1}(\vx_{k+1};\bm{\alpha})
    \ge
    b_{i-1}(\vx_k;\bm{\alpha})
    -\alpha_i(b_{i-1}(\vx_k;\bm{\alpha}))
    \ge0 .
\]
Applying this argument recursively for \(i=r-1,r-2,\ldots,1\) shows that \(b_i(\vx_{k+1};\bm{\alpha})\ge0\) for every \(i=0,\ldots,r-1\). Thus \(\vx_{k+1}\in\calC^{\star}(\bm{\alpha})\). Induction completes the proof.
\end{proof}

A sampled-data safety-critical controller can then be implemented through an MPC-CBF problem in which the CBF or ICCBF constraint is enforced along a finite prediction horizon~\cite{zeng_safetycritical_2021,zeng_enhancing_2021}. Given the current state \(\vx_k\), the prediction horizon \(\horizon\), stage cost \(L\), and terminal cost \(M\), the controller solves 

\noindent\rule{\columnwidth}{0.4pt}
\begingroup
\small
\setlength{\jot}{2pt}
\textbf{MPC-CBF:}
\begin{subequations}
\label{eq:mpc-cbf-formulation}
\begin{alignat}{2}
& J^{\star}(\vx_k;\bm{\alpha})
= {} \min_{\vu_{k:k+\horizon-1|k}}
 \quad \quad\bigg[
&& M(\vx_{k+\horizon|k})
 \notag\\[-0.2em]
&\quad
+\sum_{\tau=0}^{\horizon-1}
L(\vx_{k+\tau|k},\vu_{k+\tau|k})
\bigg]
&& \label{eq:mpc-cbf-cost}\\
\text{s.t.}\quad
& \vx_{k|k}=\vx_k,
&& \label{eq:mpc-cbf-initial-condition}\\
& \vx_{k+\tau+1|k}
=f_d(\vx_{k+\tau|k},\vu_{k+\tau|k}),
&& \; \tau=0,\ldots,\horizon-1,
\label{eq:mpc-cbf-dynamics}\\
& \vu_{k+\tau|k}\in\ControlSpace_d,
&& \; \tau=0,\ldots,\horizon-1,
\label{eq:mpc-cbf-input-constraint}\\
& b_r(\vx_{k+\tau|k},\vu_{k+\tau|k};\bm{\alpha})\ge0,
&& \; \tau=0,\ldots,\horizon-1.
\label{eq:mpc-cbf-constraint}
\end{alignat}
\end{subequations}
\endgroup
\par\noindent\rule{\columnwidth}{0.4pt}

The predicted state \(\vx_{k+\tau|k}\) denotes the state at future step \(k+\tau\) computed from information available at step \(k\). The first optimizer \(\vu^{\star}_{k|k}\) is applied to the system, and the problem is solved again at the next time step. When a standard discrete-time CBF is used instead of a higher-order/input-constrained construction, the last constraint in \eqref{eq:mpc-cbf-constraint} is replaced by \(\Delta h(\vx_{k+\tau|k},\vu_{k+\tau|k})+\alpha(h(\vx_{k+\tau|k}))\ge0\). In fixed-parameter baselines, \(\bm{\alpha}\) is held constant; in OA-CBF, \(\bm{\alpha}\) is replaced at each update by the verified parameter \(\widetilde{\bm{\alpha}}^{\star}_k\) selected by Alg.~\ref{alg:online_adaptation}.

\subsection{Baseline Implementation Details} \label{app:baseline_details}

This appendix provides additional implementation details for the baselines used in \autoref{sec:simulation}. For all benchmarks except the kinematic-bicycle DPCBF experiment, the fixed-parameter baselines and OA-CBF use linear CBF parameters in $(0,1)$ for the discrete-time CBF or ICCBF constraints. The DPCBF benchmark follows the original CBF-QP realization in~\cite{park_collision_2026}.

\textbf{(a) Fixed Low:}
    This baseline uses a fixed low CBF parameter for all time. Low CBF parameters typically make the controller more conservative because the safety constraint is activated earlier and the resulting inner safe set is smaller. We use $\widetilde{\alpha}_1=\widetilde{\alpha}_2=0.01$ for the dynamic unicycle and Quad2D, $\widetilde{\alpha}=0.01$ for Quad3D, $\widetilde{\alpha}_1=\widetilde{\alpha}_2=0.05$ for the VTOL case study, and $\widetilde{\alpha}=0.1$ for the kinematic bicycle.

\textbf{(b) Fixed High.}
    This baseline uses a fixed high CBF parameter. High CBF parameters generally allow more aggressive behavior when the system is far from obstacles, but they can also cause loss of feasibility near obstacles under input constraints. We use $\widetilde{\alpha}_1=\widetilde{\alpha}_2=0.35$ for the dynamic unicycle and VTOL case study, $\widetilde{\alpha}_1=\widetilde{\alpha}_2=0.99$ for Quad2D, and $\widetilde{\alpha}=0.99$ for Quad3D. For the kinematic-bicycle DPCBF benchmark, we set $\widetilde{\alpha}=70.0$ because the closed-loop behavior under the DPCBF-QP formulation empirically saturates around this parameter value.

\textbf{(c) OD-CBF with QP.}
    This baseline uses the optimal-decay CBF-QP method \cite{zeng_safetycritical_2021_decay, ong_properties_2025}. The CBF parameters are optimized online inside an instantaneous CBF-QP and penalized in the objective. For relative-degree-two CBFs, both CBF coefficients are optimized. This method is included when the corresponding continuous-time CBF-QP is well defined.

\textbf{(d) OD-CBF with MPC.}
    This baseline uses the optimal-decay MPC-CBF method \cite{zeng_enhancing_2021}, which incorporates online decay-rate optimization inside the MPC-CBF formulation. This baseline uses the same dynamics, constraints, prediction horizon, and nominal cost as the other MPC-based controllers.

\textbf{(e) BarrierNet.}
    We include BarrierNet~\cite{xiao_barriernet_2023} as a differentiable CBF-QP-layer baseline. Its upstream neural network predicts task-directed reference-control and CBF-related quantities from the current state and environment, and the final layer solves a multi-constraint CBF-QP. At each control step, it selects the five closest obstacles by signed clearance and constructs a fixed-size neural input from per-obstacle features $(\Delta p_x,\Delta p_y,\Delta\theta,d_{\mathrm{clr}},v)$, where $\Delta p_x$ and $\Delta p_y$ denote the obstacle position relative to the robot in the robot body frame, $\Delta\theta$ denotes the relative bearing angle to the obstacle, $d_{\mathrm{clr}}$ denotes the signed clearance to the obstacle boundary, and $v$ denotes the robot speed. The raw QP context contains the reduced robot state, the goal, and the selected obstacle states. A shared per-obstacle encoder predicts obstacle-wise CBF parameters, while a control head predicts a nominal correction around the reference input. In our implementation, the differentiable QP layer is trained with slack variables and without hard input bounds to avoid degenerate training batches; during deployment, the QP is evaluated with input bounds, and the computed input is clipped at the end. The network is trained by imitation on safe, goal-reaching CBF-QP rollouts generated from the same obstacle distribution, while failed, colliding, or timeout rollouts are discarded. Because BarrierNet relies on a continuous-time CBF-QP layer, it is evaluated only when the corresponding CBF-QP constraint is well defined.

\textbf{(f) OA-CBF with fully connected encoding (OA-CBF w/ FC).}
    This baseline follows the fully connected environment encoding implementation provided by \cite{kim_learning_2025,kim_how_2025}. In that implementation, the neural-network input contains the robot state, the candidate CBF parameter, and the closest-obstacle information only. This produces a fixed-dimensional input vector and avoids padding, but it can discard safety-relevant information when multiple obstacles are simultaneously relevant. This baseline is included to isolate the effect of replacing the fixed-dimensional closest-obstacle encoding with the proposed graph-attention encoding.

\textbf{(g) OA-CBF with graph-attention encoding (OA-CBF w/ GAT).}
    This is the proposed method. The environment is represented as a graph consisting of the robot, the goal, and all observed obstacles. Attention-based aggregation produces a fixed-dimensional context vector without requiring padding, truncation, or heuristic obstacle selection. The resulting context is concatenated with the candidate CBF parameter and evaluated by the PENN.

\end{document}